\newcommand{\squeeze}{\textstyle } 
\newcommand{\Lave}{\bar{L}}
\newcommand{\prm}[1]{\pi_{#1}}
\def\<#1,#2>{\langle #1,#2\rangle}
\newcommand{\sigmaesc}{\sigma_{\ast}}
\def\xx{{\boldsymbol x}}
\def\yy{{\boldsymbol y}}
\newcommand{\sigmarr}{\sigma_{\mathrm{rad}}}
\newcommand{\newqedhere}{\tag*{\qedhere}}
\theoremstyle{definition}
\newtheorem{Definition}{Definition} 
\newtheorem{Example}{Example} 
\newtheorem{Assumption}{Assumption}
\newtheorem{Lemma}{Lemma}
\newtheorem{Theorem}{Theorem}
\crefname{Lemma}{Lemma}{lemmas}
\crefname{Proposition}{Proposition}{Propositions}
\crefname{Theorem}{theorem}{theorems}
\crefname{Assumption}{Assumption}{Assumptions}
\Crefname{Lemma}{Lemma}{Lemmas}
\Crefname{Proposition}{Proposition}{Propositions}
\Crefname{Theorem}{Theorem}{Theorems}
\Crefname{Assumption}{Assumption}{Assumptions}
\definecolor{mygreen}{RGB}{77,175,74}
\definecolor{myblue}{RGB}{55,126,184}
\definecolor{mydarkgreen}{RGB}{39,130,67}
\newcommand{\rrbox}[1]{\colorbox{myblue!30}{#1}}
\newcommand{\sobox}[1]{\colorbox{mygreen!30}{#1}}
\icmltitlerunning{Proximal and Federated Random Reshuffling}
\begin{document}

\twocolumn[
\icmltitle{Proximal and Federated Random Reshuffling}



\icmlsetsymbol{equal}{*}

\begin{icmlauthorlist}
\icmlauthor{Konstantin Mishchenko}{kaust}
\icmlauthor{Ahmed Khaled}{kaust}
\icmlauthor{Peter Richt\'{a}rik}{kaust}
\end{icmlauthorlist}

\icmlaffiliation{kaust}{King Abdullah University of Science and Technology, Thuwal, Saudi Arabia}

\icmlcorrespondingauthor{Konstantin Mishchenko}{konsta.mish@gmail.com}

\icmlkeywords{Machine Learning, ICML}

\vskip 0.3in
]



\printAffiliationsAndNotice{}  

\begin{abstract}
    Random Reshuffling (RR), also known as Stochastic Gradient Descent (SGD) without replacement, is a popular and theoretically grounded method for finite-sum minimization. We propose two new algorithms: Proximal and Federated Random Reshuffing (ProxRR and FedRR). The first algorithm, ProxRR, solves composite convex finite-sum minimization problems in which the objective is the sum of a (potentially non-smooth) convex regularizer and an average of $n$ smooth objectives. We obtain the second algorithm, FedRR, as a special case of ProxRR applied to a reformulation of distributed problems with either homogeneous or heterogeneous data. We study the algorithms' convergence properties with constant and decreasing stepsizes, and show that they have considerable advantages over Proximal and Local SGD. In particular, our methods have superior complexities and ProxRR evaluates the proximal operator once per epoch only. When the proximal operator is expensive to compute, this small difference makes ProxRR up to $n$ times faster than algorithms that evaluate the proximal operator in every iteration. We give examples of practical optimization tasks where the proximal operator is difficult to compute and ProxRR has a clear advantage. Finally, we corroborate our results with experiments on real data sets.
\end{abstract}

\section{Introduction}

Modern theory and  practice of training supervised machine learning models is based on the paradigm of regularized empirical risk minimization (ERM)~\citep{shai_book}. While the ultimate goal of supervised learning is  to train models that generalize well to unseen data, in practice only a finite data set is available during training. Settling for a model merely minimizing the average loss on this training set---the empirical risk---is insufficient, as this often leads to over-fitting and poor generalization performance in practice. Due to this reason, empirical risk is virtually always amended with a suitably chosen regularizer whose role is to encode prior knowledge about the learning task at hand, thus biasing the training algorithm towards better performing models.

The regularization framework is quite general and perhaps surprisingly it also allows us to consider methods for federated learning (FL)---a paradigm in which we aim at training model for a number of clients that do not want to reveal their data~\citep{FEDLEARN, FL2017-AISTATS, FL-big}. The training in FL usually happens on devices with only a small number of model updates being shared with a global host. To this end, Federated Averaging algorithm has emerged that performs Local SGD updates on the clients' devices and periodically aggregates their average. Its analysis usually requires special techniques and deliberately constructed sequences hindering the research in this direction. We shall see, however, that the convergence of our FedRR follows from merely applying our algorithm for regularized problems to a carefully chosen reformulation.

Formally, regularized ERM problems are optimization problems of the form \begin{equation}
  \label{eq:finite-sum-min}
  \squeeze
  \min \limits_{x \in \R^d} \Bigl [ P(x) \eqdef \frac{1}{n} \sum \limits_{i=1}^{n} f_{i} (x)  + \psi(x)\Bigr ],
\end{equation}
where $f_i:\R^d \to \R$ is the loss of model parameterized by vector $x\in \R^d$ on the $i$-th training data point, and $\psi:\R^d\to \R \cup \{+\infty\}$ is a regularizer.  Let $[n]\eqdef \{1,2,\dots,n\}$. We shall make the following assumption throughout the paper without explicitly mentioning it:

\begin{Assumption}\label{as:smooth_fi_proper_psi}
The functions $f_i$ are $L_i$-smooth and convex, and  the regularizer $\psi$ is proper, closed and convex. Let $L_{\max} \eqdef \max_{i\in [n]} L_i$.
\end{Assumption}

In some results we will additionally assume that either the individual functions $f_i$, or their average $f\eqdef \frac{1}{n}\sum_i f_i$, or the regularizer $\psi$ are $\mu$-strongly convex.  Whenever we need such additional assumptions, we will make this explicitly clear. While all these concepts are standard, we review them briefly in Section~\ref{sec:basic_notions}.

{\bf Proximal SGD.} 
When the number $n$ of training data points  is huge, as is increasingly common in practice, the most efficient algorithms for solving \eqref{eq:finite-sum-min} are stochastic first-order methods, such as stochastic gradient descent (SGD)~\citep{bordes2009sgd}, in one or another of its many variants proposed in the last decade~\citep{Shang2018,pham2020proxsarah}. These method almost invariably rely on alternating stochastic gradient  steps with the evaluation of the proximal operator
\[
\squeeze
	\prox_{\gamma\psi}(x)
	\eqdef \argmin_{z\in \R^d} \left\{\gamma\psi(z) + \frac{1}{2}\|z-x\|^2\right\}.
\]
The simplest of these has the form
\begin{equation}\label{eq:prox-SGD}
x_{k+1}^{\mathrm{SGD}} = \prox_{\gamma_k \psi}(x_k^{\mathrm{SGD}} - \gamma_k \nabla f_{i_k}(x_k^{\mathrm{SGD}})),
\end{equation}
where $i_k$ is an index from $\{1,2,\dots,n\}$ chosen uniformly at random, and $\gamma_k>0$ is a properly chosen learning rate. Our understanding of \eqref{eq:prox-SGD} is quite mature; see \citep{Gorbunov2020} for a general  treatment which considers  methods of this form in conjunction with more advanced stochastic gradient estimators in place of $\nabla f_{i_k}$.

Applications such as training sparse linear models~\citep{tibshirani1996regression}, non-negative matrix factorization~\citep{lee1999learning}, image deblurring~\citep{bredies2010total}, and training with group selection~\citep{yuan2006model} all rely on the use of hand-crafted  regularizes. For most of them, the proximal operator can be evaluated efficiently, and SGD is near or at the top of the list of efficient training algorithms.

{\bf Random reshuffling.} A particularly successful variant of SGD is based on the idea of random shuffling (permutation) of the training data followed by $n$ iterations of the form \eqref{eq:prox-SGD}, with the index $i_k$ following the pre-selected  permutation~\citep{bottou2012stochastic}. This process is repeated several times, each time using a new freshly sampled random permutation of the data, and the resulting  method is known under the name {\em Random Reshuffling (RR)}.\footnote{While we will comment on this in more detail later, RR is not known to converge in the proximal setting, i.e., if $\psi \neq 0$. Moreover, it is not even clear if this is the right proximal extension of RR.} When the same permutation is used throughout, the technique is known under the name {\em Shuffle Once (SO)}.

One of the main advantages of this approach is rooted in its intrinsic ability to avoid cache misses when reading the data from memory, which enables a significantly faster implementation.  Furthermore, RR is often observed to converge in fewer iterations than SGD in practice. This can intuitively be ascribed to the fact that while due to its sampling-with-replacement approach SGD can miss to learn from some data points in any given epoch,  RR will necessarily learn from each data point in each epoch.

Understanding the random reshuffling trick, and why it works, has been a non-trivial open problem for a  long time~\citep{Bottou2009,RR-conjecture2012,Gurbuzbalaban2019RR,haochen2018random}. Until recent development   which lead to a significant simplification of the convergence analysis technique and proofs \citep{MKR2020rr},  prior state of the art relied on long and  elaborate proofs requiring sophisticated arguments and tools, such as  analysis via the Wasserstein distance~\citep{Nagaraj2019}, and relied on a significant number of strong assumptions about the objective \citep{shamir2016without, haochen2018random}. In alternative recent development, \citet{Ahn2020} also develop new tools for analyzing the convergence of random reshuffling, in particular using decreasing stepsizes and for objectives satisfying the Polyak-{\L}ojasiewicz condition, a generalization of strong convexity~\citep{polyak1963gradient, lojasiewicz1963topological}.

The difficulty of analyzing RR has been the main obstacle in the development of even some of the most seemingly benign extensions of the method. Indeed, while all these are well understood in combination with its much simpler-to-analyze cousin SGD,  {\em to the best of our knowledge, there exists no theoretical analysis of proximal, parallel, and importance sampling variants of RR with both constant and decreasing stepsizes, and in most cases it is not even clear how should such methods be constructed.
} Empowered by and building on the recent  advances  of \cite{MKR2020rr}, in this paper we address all these challenges.

\section{Contributions}

In this section we outline the key contributions  of our work, and also offer a few intuitive explanations motivating some of the development.

{\bf From RR to proximal RR.}
Despite rich literature on proximal SGD \citep{Gorbunov2020}, it is not obvious how one should extend RR to solve problem~\eqref{eq:finite-sum-min} when a  nonzero regularizer $\psi$ is present. Indeed, the standard practice for SGD is to apply the proximal operator after each stochastic step~\citep{duchi2009efficient}, i.e., in analogy with \eqref{eq:prox-SGD}. On the other hand, RR is motivated by the fact that a data pass approximates the full gradient step. If we apply the proximal operator after each iteration of RR, we would no longer approximate the full gradient after an epoch, as illustrated by the next example.
\begin{Example}
	Let $n=2$, $\psi(x)=\frac{1}{2}\|x\|^2$, $f_1(x)=\<c_1, x>$, $f_2(x)=\<c_2, x>$ with some $c_1, c_2\in\mathbb{R}^d$, $c_1\neq c_2$. Let $x_0\in\mathbb{R}^d$, $\gamma>0$ and define $x_1 = x_0 - \gamma \nabla f_1(x_0)$, $x_2=x_1-\gamma\nabla f_2(x_1)$. Then, we have $\prox_{2\gamma\psi}(x_2)=\prox_{2\gamma\psi}(x_0-2\gamma\nabla f(x_0))$. However, if $\tilde x_1=\prox_{\gamma\psi}(x_0 - \gamma \nabla f_1(x_0))$ and $\tilde x_2=\prox_{\gamma \psi}(x_1-\gamma \nabla f_2(\tilde x_1))$, then $\tilde x_2\neq \prox_{2\gamma\psi}(x_0-2\gamma\nabla f(x_0))$.
\end{Example}

Motivated by this observation, we propose ProxRR (\Cref{alg:proxrr}), in which the proximal operator is applied at the end of each epoch of RR, i.e., after each pass through all randomly reshuffled data. 

\begin{algorithm}[t]
    \caption{Proximal Random Reshuffling \rrbox{(ProxRR)} and Shuffle-Once \sobox{(ProxSO)}}
    \label{alg:proxrr}
    \begin{algorithmic}[1]
    \Require Stepsizes $\gamma_t > 0$, initial vector $x_0 \in \R^d$, number of epochs $T$
    \State Sample a permutation $\pi =(\prm{0}, \prm{1}, \ldots, \prm{n-1})$ of $[n]$   \sobox{(Do step 1 only for ProxSO)} 
        \For{epochs $t=0,1,\dotsc,T-1$}
            \State  Sample a permut.\ $\pi =(\prm{0}, \prm{1}, \ldots, \prm{n-1})$ of $[n]$  \rrbox{(Do step 3 only for ProxRR)}
            \State $x_t^0 = x_t$
            \For{$i=0, 1, \ldots, n-1$}
                \State $x_{t}^{i+1} = x_t^{i} - \gamma_t \nabla f_{\prm{i}} (x_t^i)$
            \EndFor
        \State $x_{t+1} = \prox_{\gamma_t n \psi}(x_{t}^{n})$
        \EndFor
    \end{algorithmic}
\end{algorithm}

A notable property of \Cref{alg:proxrr} is that {\em only a single proximal operator evaluation is needed during each data pass.} This is in sharp contrast with the way proximal SGD  works, and offers significant advantages in regimes where the evaluation of the proximal mapping is expensive (e.g., comparable to the evaluation of $n$ gradients $\nabla f_1, \dots, \nabla f_n$).

We establish several convergence results for ProxRR, of which we highlight two here. Both offer a linear convergence rate with a fixed stepsize to a neighborhood of the solution. Firstly, in the case when each $f_i$ is $\mu$-strongly convex, we prove the rate (see Theorem~\ref{thm:f-strongly-convex-psi-convex})
\[
\squeeze
			\ecn{x_T - x_\ast} \leq \br{1 - \gamma \mu}^{n T} \sqn{x_0 - x_\ast} + \frac{2 \gamma^2 \sigmarr^2}{\mu},
\]
where $\gamma_t = \gamma \leq \frac{1}{L_{\max}}$ is the stepsize,   and $\sigmarr^2$ is a {\em shuffling radius} constant (for precise definition, see \eqref{eq:bregman-div-noise}). In Theorem~\ref{thm:shuffling-radius-bound} we bound the shuffling radius in terms of  $\norm{\nabla f(x_\ast)}^2$, $n$,  $L_{\max}$ and the more common quantity $\sigmaesc^2 \eqdef \frac{1}{n} \sum_{i=1}^{n} \sqn{\nabla f_{i} (x_\ast) - \nabla f(x_\ast)}$.

 Secondly, if  $\psi$ is $\mu$-strongly convex, we prove the rate \[
\squeeze
			\ecn{x_T - x_\ast} \leq \br{1 + 2\gamma \mu n}^{-T} \sqn{x_0 - x_\ast} + \frac{ \gamma^2 \sigmarr^2}{\mu},
\]
	where $\gamma_t = \gamma \leq \frac{1}{L_{\max}}$ is the stepsize (see Theorem~\ref{thm:psi-strongly-convex-f-convex})
.

Both mentioned rates show exponential (linear in logarithmic scale) convergence to a neighborhood whose size is proportional to $\gamma^2 \sigmarr^2$. Since we can choose $\gamma$ to be arbitrarily small or periodically decrease it, this implies that the iterates converge to $x_\ast$ in the limit. Moreover, we show in \Cref{sec:strongly_convex} that when $\gamma = \cO(\frac{1}{T})$ the error is $\cO(\frac{1}{T^2})$, which is superior to the $\cO(\frac{1}{T})$ error of SGD.

{\bf Decreasing stepsizes.}
The convergence of RR is not always exact and depends on the parameters of the objective. Similarly, if the shuffling radius $ \sigmarr^2$ is positive, and we wish to find an $\e$-approximate solution, the optimal choice of a fixed stepsize for ProxRR will depend on $\e$. This deficiency can be fixed by using decreasing stepsizes in both vanilla RR~\citep{Ahn2020} and in SGD~\citep{Stich2019b}. We adopt the same technique to our setting. However, we depart from \citep{Ahn2020} by only adjusting the stepsize \emph{once per epoch} rather than at every iteration, similarly to the concurrent work of \cite{tran2020shuffling} on RR with momentum.  For details, see Section~\ref{sec:extensions}.

{\bf Importance sampling for proximal RR.}
While importance sampling is a well established technique for speeding up the convergence of SGD \citep{IProx-SDCA, ES-SGD-nonconvex},  no importance sampling variant of RR has been proposed nor analyzed. This is not surprising since the key property of importance sampling in SGD---unbiasedness---does not hold for RR. Our approach to equip ProxRR with importance sampling is via a reformulation of problem \eqref{eq:finite-sum-min} into a similar problem with a larger number of summands. In particular, for each $i\in [n]$ we include $n_i$ copies of the function $\frac{1}{n_i}f_i$, and then take average of all $N = \sum_i n_i$ functions constructed this way. The value of $n_i$ depends on the ``importance'' of $f_i$, described below.  We then apply ProxRR to this reformulation.

If $f_i$ is $L_i$-smooth for all $i\in [n]$ and we let $\bar{L}\eqdef \frac{1}{n}\sum_i L_i$, then we choose $n_i= \lceil \frac{L_i}{\bar{L}} \rceil$. It is easy to show that $N\leq 2n$, and hence our reformulation leads to at most a doubling of the number of functions forming the finite sum. However, the overall complexity of ProxRR applied to this reformulation will depend on $\bar{L}$ instead of $\max_i L_i$ (see Theorem~\ref{thm:IS}), which can lead to a significant improvement.  For details of the construction and our complexity results, see Section~\ref{sec:extensions}.

{\bf Application to Federated Learning.}
In Section~\ref{sec:FL} we describe an application of our results to federated learning \citep{FEDLEARN, FL2017-AISTATS, FL-big}. 

{\bf Results for SO.}
All of our results apply to the Shuffle-Once algorithm as well. For simplicity, we center the discussion around RR, whose current theoretical guarantees in the non-convex case are better than that of SO. Nevertheless, the other results are the same for both methods, and ProxRR is identical to ProxSO in terms of our theory too. A study of the empirical differences between RR and SO can be found in \citep{MKR2020rr}.

\section{Preliminaries}

In our analysis, we build upon the notions of  {\em limit points} and {\em shuffling variance} introduced by \citet{MKR2020rr} for vanilla (i.e., non-proximal) RR. Given a stepsize $\gamma > 0$ (held constant during each epoch) and a permutation $\pi$ of $\{1,2,\dots,n\}$, the inner loop iterates of RR/SO converge to a neighborhood of intermediate limit points $x_\ast^1, x_\ast^2, \ldots, x_\ast^{n}$ defined by
\begin{equation}
\squeeze
	x_\ast^i \eqdef x_\ast - \gamma \sum \limits_{j=0}^{i-1} \nabla f_{\pi_{j}} (x_\ast), \quad i=1,\dotsc, n-1. \label{eq:x_ast_i}
\end{equation}
The intuition behind this definition is fairly simple: if we performed $i$ steps starting at $x_*$, we would end up close to $x_*^i$. To quantify the closeness, we define the \emph{shuffling radius}.

\begin{Definition}[Shuffling radius]
	\label{def:bregman-div-noise}
	Given a stepsize $\gamma>0$ and a random permutation $\pi$ of $\{ 1, 2, \ldots, n \}$ used in Algorithm~\ref{alg:proxrr}, define $x_\ast^i = x_\ast^i (\gamma, \pi)$ as in \eqref{eq:x_ast_i}. Then, the shuffling radius is defined by
	\begin{equation}\label{eq:bregman-div-noise}  
	\squeeze
	\sigmarr^2 (\gamma) \eqdef \max \limits_{i=1, \ldots, n-1} \left [ \frac{1}{\gamma^2} \mathbb{E}_\pi\bigl[D_{f_{\pi_{i}}} (x_\ast^i, x_\ast)\bigr] \right ], 
	\end{equation}
	where the expectation is taken with respect to the randomness in the permutation $\pi$. If there are multiple stepsizes $\gamma_1, \gamma_2, \ldots$ used in Algorithm~\ref{alg:proxrr}, we take the maximum of all of them as the shuffling radius, i.e., 
	\[ \sigmarr^2 \eqdef \smash{\max_{t=1, 2\dotsc}}\sigmarr^2 (\gamma_t). \]
\end{Definition}

The shuffling radius is related by a multiplicative factor in the stepsize to the shuffling variance introduced by \citet{MKR2020rr}. When the stepsize is held fixed, the difference between the two notions is minimal but when the stepsize is decreasing, the shuffling radius is easier to work with, since it can be upper bounded by problem constants independent of the stepsizes. To prove this upper bound, we rely on a lemma due to \citet{MKR2020rr} that bounds the variance when sampling without replacement.

\begin{Lemma}[Lemma 1 in \citep{MKR2020rr}]
	\label{lemma:sampling-without-replacement}
	Let $X_1, \ldots, X_n \in \R^d$ be fixed vectors, let $\bar{X} = \frac{1}{n} \sum_{i=1}^{n} X_i$ be their mean, and let $\sigma^2 = \frac{1}{n} \sum_{i=1}^{n} \sqn{X_i - \bar{X}}$ be their  variance. Fix any $i \in \{ 1, \ldots, n \}$ and let $X_{\pi_0}, \ldots, X_{\pi_{i-1}}$ be sampled uniformly without replacement from $\{ X_1, \ldots, X_n \}$ and $\bar{X}_{\pi}=\frac{1}{i}\sum_{j=0}^{i-1} X_{\pi_j}$ be their average. Then, the sample average and variance are given by
\begin{equation}\label{eq:b97fg07gdf_08yf8d}
\squeeze
		\ec{\bar{X}_{\pi}} = \bar{X}, \qquad \ecn{\bar{X}_\pi - \bar{X}} = \frac{n-i}{i (n-1)} \sigma^2.
\end{equation}
\end{Lemma}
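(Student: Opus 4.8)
The plan is to reduce both claims to the first and second moments of the centered partial sum $S \eqdef \sum_{j=0}^{i-1}(X_{\pi_j} - \bar{X})$, since $\bar{X}_\pi - \bar{X} = \frac{1}{i}S$. First I would settle the mean: by the symmetry of sampling without replacement, each individual draw $X_{\pi_j}$ is marginally uniform on $\{X_1,\dots,X_n\}$, so $\mathbb{E}[X_{\pi_j}] = \bar{X}$, and linearity gives $\mathbb{E}[\bar{X}_\pi] = \bar{X}$. In particular $\mathbb{E}[S] = 0$, so the second identity is a genuine variance computation rather than a bias term.

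For the variance, I would write $Y_j \eqdef X_{\pi_j} - \bar{X}$ and expand $\mathbb{E}\|S\|^2 = \sum_{j,k}\mathbb{E}\langle Y_j, Y_k\rangle$, splitting into the $i$ diagonal terms ($j=k$) and the $i(i-1)$ off-diagonal terms ($j\neq k$). Each diagonal term equals $\mathbb{E}\|Y_j\|^2 = \sigma^2$, again by marginal uniformity of a single draw. The off-diagonal terms are where the without-replacement structure genuinely enters, and I expect them to be the main obstacle: for $j \neq k$ the pair $(\pi_j,\pi_k)$ is a uniformly random \emph{ordered} pair of \emph{distinct} indices, so $\mathbb{E}\langle Y_j, Y_k\rangle = \frac{1}{n(n-1)}\sum_{a\neq b}\langle X_a - \bar{X}, X_b - \bar{X}\rangle$, a quantity independent of $j$ and $k$.

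The key trick to evaluate this cross term is the centering identity $\sum_a (X_a - \bar{X}) = 0$, which yields $0 = \|\sum_a (X_a - \bar{X})\|^2 = \sum_{a,b}\langle X_a - \bar{X}, X_b - \bar{X}\rangle$. Subtracting the diagonal shows the off-diagonal index sum equals $-\sum_a \|X_a - \bar{X}\|^2 = -n\sigma^2$, so each off-diagonal covariance is $-\sigma^2/(n-1)$ — the negative correlation characteristic of sampling without replacement. Collecting contributions gives $\mathbb{E}\|S\|^2 = i\sigma^2 - i(i-1)\frac{\sigma^2}{n-1}$, and dividing by $i^2$ produces $\frac{\sigma^2}{i}\bigl(1 - \frac{i-1}{n-1}\bigr) = \frac{n-i}{i(n-1)}\sigma^2$ after simplification. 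The only points I would verify are the edge cases: for $i=1$ there are no off-diagonal terms and the formula correctly returns $\sigma^2$, while the denominator $n-1$ never vanishes in the nontrivial regime $n\geq 2$ where sampling more than one index is meaningful.
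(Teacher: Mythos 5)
Your proof is correct: the paper does not reprove this lemma (it is imported verbatim as Lemma~1 of the cited reference), and your argument — marginal uniformity for the mean, then expanding $\mathbb{E}\|S\|^2$ into diagonal terms equal to $\sigma^2$ and off-diagonal covariances equal to $-\sigma^2/(n-1)$ via the centering identity $\sum_a (X_a-\bar X)=0$ — is exactly the standard derivation used there. The algebra simplifying $\frac{\sigma^2}{i}\bigl(1-\frac{i-1}{n-1}\bigr)$ to $\frac{n-i}{i(n-1)}\sigma^2$ checks out, as do your edge cases.
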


Armed with \Cref{lemma:sampling-without-replacement}, we can upper bound the shuffling radius using the smoothness constant $L_{\max}$, size of the vector $\nabla f(x_\ast)$ and the variance $\sigmaesc^2$ of the gradient vectors $\nabla f_1(x_\ast)$, $\nabla f_2(x_\ast)$, \dots, $\nabla f_n(x_\ast)$. 

\begin{Theorem}
	\label{thm:shuffling-radius-bound}
	For any stepsize $\gamma > 0$ and any random permutation $\pi$ of $\{1,2,\dots,n\}$ we have
	\[
	\squeeze
		\sigmarr^2 \le \frac{L_{\max}}{2}n\Bigl(n\|\nabla f(x_\ast)\|^2 + \frac{1}{2}\sigmaesc^2\Bigr),
	\]
	where $x_\ast$ is a solution of Problem~\eqref{eq:finite-sum-min} and $\sigmaesc^2$ is the population variance at the optimum 
	\begin{equation}\label{eq:UG(*G(DG(*DGg87gf7ff}
	\squeeze
		\sigmaesc^2 \eqdef \frac{1}{n} \smash{\sum \limits_{i=1}^{n}} \sqn{\nabla f_{i} (x_\ast) - \nabla f(x_\ast)}.
		\end{equation}
\end{Theorem}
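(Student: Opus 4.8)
The plan is to bound each Bregman divergence appearing in Definition~\ref{def:bregman-div-noise} by combining the smoothness of the $f_i$ with the sampling-without-replacement variance formula from \Cref{lemma:sampling-without-replacement}. First I would recall that for an $L$-smooth convex function $g$ one has $D_g(x,y)\le \frac{L}{2}\sqn{x-y}$, so that for every $i$,
\[
	D_{f_{\pi_i}}(x_\ast^i, x_\ast) \le \frac{L_{\pi_i}}{2}\sqn{x_\ast^i - x_\ast} \le \frac{L_{\max}}{2}\sqn{x_\ast^i - x_\ast}.
\]
The point of replacing $L_{\pi_i}$ by the uniform constant $L_{\max}$ is that it decouples the smoothness factor from the permutation, isolating all the remaining randomness in $\sqn{x_\ast^i - x_\ast}$. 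By the definition \eqref{eq:x_ast_i} of the limit point, $x_\ast^i - x_\ast = -\gamma \sum_{j=0}^{i-1}\nabla f_{\pi_j}(x_\ast) = -\gamma i\, \bar g_\pi$, where $\bar g_\pi \eqdef \frac{1}{i}\sum_{j=0}^{i-1}\nabla f_{\pi_j}(x_\ast)$ is the partial average of the first $i$ sampled gradients. Hence $\sqn{x_\ast^i - x_\ast} = \gamma^2 i^2 \sqn{\bar g_\pi}$, and crucially this quantity depends only on $\pi_0,\dots,\pi_{i-1}$ (not on $\pi_i$), which is exactly the ``sample $i$ elements without replacement'' setting of \Cref{lemma:sampling-without-replacement}.

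Next I would apply that lemma with $X_j = \nabla f_j(x_\ast)$, whose mean is $\nabla f(x_\ast)$ and whose variance is $\sigmaesc^2$. Using the bias--variance decomposition $\mathbb{E}_\pi[\sqn{\bar g_\pi}] = \sqn{\mathbb{E}_\pi[\bar g_\pi]} + \mathbb{E}_\pi[\sqn{\bar g_\pi - \mathbb{E}_\pi[\bar g_\pi]}]$ together with \eqref{eq:b97fg07gdf_08yf8d}, I obtain
\[
	\mathbb{E}_\pi\bigl[\sqn{\bar g_\pi}\bigr] = \sqn{\nabla f(x_\ast)} + \frac{n-i}{i(n-1)}\sigmaesc^2,
\]
and therefore
\[
	\frac{1}{\gamma^2}\mathbb{E}_\pi\bigl[D_{f_{\pi_i}}(x_\ast^i, x_\ast)\bigr] \le \frac{L_{\max}}{2}\Bigl(i^2\sqn{\nabla f(x_\ast)} + \frac{i(n-i)}{n-1}\sigmaesc^2\Bigr).
\]

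Finally I would take the maximum over $i\in\{1,\dots,n-1\}$. Bounding the max of the sum by the sum of the maxima, the coefficient $i^2$ is maximized at $i=n-1$ with $(n-1)^2 \le n^2$, while $\frac{i(n-i)}{n-1}$ is maximized near $i=n/2$ with value at most $\frac{n^2}{4(n-1)} \le \frac{n}{2}$ for every $n\ge 2$. Substituting these two elementary bounds yields precisely $\sigmarr^2 \le \frac{L_{\max}}{2}\bigl(n^2\sqn{\nabla f(x_\ast)} + \frac{n}{2}\sigmaesc^2\bigr) = \frac{L_{\max}}{2}n\bigl(n\sqn{\nabla f(x_\ast)} + \frac{1}{2}\sigmaesc^2\bigr)$. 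I expect no serious obstacle: the only steps requiring care are the decoupling of $L_{\pi_i}$ to the uniform $L_{\max}$ (so that the expectation reduces to the clean form of \Cref{lemma:sampling-without-replacement}) and verifying that the convenient constants $n^2$ and $\frac{n}{2}$ dominate the exact maximizers $(n-1)^2$ and $\frac{n^2}{4(n-1)}$.
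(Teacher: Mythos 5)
Your proposal is correct and follows essentially the same route as the paper's proof: bound the Bregman divergence by $\frac{L_{\max}}{2}\sqn{x_\ast^i-x_\ast}$ via smoothness, rewrite $x_\ast^i-x_\ast$ as $-\gamma i$ times the partial average of the gradients at $x_\ast$, apply \Cref{lemma:sampling-without-replacement} with the bias--variance decomposition, and finish with the elementary bounds $i^2\le n^2$ and $i(n-i)\le \frac{n(n-1)}{2}$ (your $\frac{n^2}{4(n-1)}\le\frac{n}{2}$ is the same estimate in a slightly different form). No gaps.
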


All proofs are relegated to the supplementary material. In order to better understand the bound given by \Cref{thm:shuffling-radius-bound}, note that if there is no proximal operator (i.e., $\psi = 0$) then $\nabla f(x_\ast) = 0$ and we get that $\sigmarr^2 \leq \frac{L_{\max} n \sigmaesc^2}{4}$. This recovers the existing upper bound on the shuffling variance of \citet{MKR2020rr} for vanilla RR. On the other hand, if $\nabla f(x_\ast) \neq 0$ then we get an additive term of size proportional to the squared norm of $\nabla f(x_\ast)$. 

\section{Theory for strongly convex losses $f_1, \dots,f_n$}\label{sec:strongly_convex}

Our first theorem establishes a convergence rate for Algorithm~\ref{alg:proxrr} applied with a constant stepsize to Problem~\eqref{eq:finite-sum-min} when each objective $f_i$ is strongly convex. This assumption is commonly satisfied in machine learning applications where each $f_i$ represents a regularized loss on some data points, as in $\ell_2$ regularized linear regression and $\ell_2$ regularized logistic regression.

\begin{Theorem}
	\label{thm:f-strongly-convex-psi-convex}
Let \Cref{as:smooth_fi_proper_psi} be satisfied. 		Further, assume that each $f_i$  is $\mu$-strongly convex. If Algorithm~\ref{alg:proxrr} is run with constant stepsize $\gamma_t = \gamma \leq \frac{1}{L_{\max}}$, then the iterates generated by the algorithm satisfy
\[
\squeeze
			\ecn{x_T - x_\ast} \leq \br{1 - \gamma \mu}^{n T} \sqn{x_0 - x_\ast} + \frac{2 \gamma^2 \sigmarr^2}{\mu}.
\]
\end{Theorem}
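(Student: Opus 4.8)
The plan is to show that a single epoch of \Cref{alg:proxrr} contracts the squared distance to $x_\ast$ by a factor $\br{1-\gamma\mu}^n$, up to an additive noise term controlled by the shuffling radius, and then to unroll this recursion over the $T$ epochs. The decisive observation, and what makes the proximal operator essentially invisible to the analysis, is that $x_\ast$ is a fixed point of the full epoch map. First I would record the identity $x_\ast^n \eqdef x_\ast - \gamma\sum_{j=0}^{n-1}\nabla f_{\prm{j}}(x_\ast) = x_\ast - \gamma n\nabla f(x_\ast)$, valid because the permutation merely reorders the sum. The first-order optimality condition $-\nabla f(x_\ast)\in\partial\psi(x_\ast)$ for Problem~\eqref{eq:finite-sum-min} is then \emph{exactly} the condition $x_\ast^n - x_\ast \in \gamma n\,\partial\psi(x_\ast)$ characterizing the minimizer defining $\prox_{\gamma n\psi}$, so that $\prox_{\gamma n\psi}(x_\ast^n) = x_\ast$. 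Note this is precisely where the scaling $\gamma n\psi$ in the prox step matters: any other scaling would break the fixed-point property.

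Since $\prox_{\gamma n\psi}$ is firmly nonexpansive, and hence $1$-Lipschitz, applying it to the last inner iterate $x_t^n$ and to $x_\ast^n$ gives
\[
\sqn{x_{t+1}-x_\ast} = \sqn{\prox_{\gamma n\psi}(x_t^n) - \prox_{\gamma n\psi}(x_\ast^n)} \le \sqn{x_t^n - x_\ast^n}.
\]
This removes the prox from the picture and reduces the per-epoch analysis to controlling $\sqn{x_t^n - x_\ast^n}$ in terms of $\sqn{x_t^0 - x_\ast} = \sqn{x_t - x_\ast}$ — exactly the inner-loop quantity studied for vanilla RR, since the inner loop of ProxRR is identical to that of RR.

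For the inner loop I would invoke the per-step estimate of \citet{MKR2020rr}: using $\mu$-strong convexity and $L_{\prm{i}}$-smoothness of $f_{\prm{i}}$ together with $\gamma\le 1/L_{\max}$, each gradient step satisfies
\[
\mathbb{E}\sqn{x_t^{i+1}-x_\ast^{i+1}} \le \br{1-\gamma\mu}\mathbb{E}\sqn{x_t^{i}-x_\ast^{i}} + 2\gamma\,\mathbb{E}\bigl[D_{f_{\prm{i}}}(x_\ast^i,x_\ast)\bigr].
\]
Unrolling over $i=0,\dots,n-1$ (with $x_\ast^0 = x_\ast$, so the $i=0$ Bregman term vanishes) and combining with the prox inequality yields the one-epoch recursion
\[
\mathbb{E}\sqn{x_{t+1}-x_\ast} \le \br{1-\gamma\mu}^n\mathbb{E}\sqn{x_t-x_\ast} + 2\gamma\sum_{i=1}^{n-1}\br{1-\gamma\mu}^{n-1-i}\mathbb{E}\bigl[D_{f_{\prm{i}}}(x_\ast^i,x_\ast)\bigr].
\]

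Finally I would unroll over $t=0,\dots,T-1$, obtaining the leading term $\br{1-\gamma\mu}^{nT}\sqn{x_0-x_\ast}$ plus a weighted sum of all the Bregman terms. The one subtlety in reaching the clean constant $\tfrac{2\gamma^2\sigmarr^2}{\mu}$ is to treat the $nT$ gradient steps as a \emph{single} geometric accumulation rather than bounding each epoch separately and re-summing (which would over-count): bounding every term by $\mathbb{E}[D_{f_{\prm{i}}}(x_\ast^i,x_\ast)]\le \gamma^2\sigmarr^2$, immediate from \Cref{def:bregman-div-noise}, and using $\sum_{k=0}^{\infty}\br{1-\gamma\mu}^k = \tfrac{1}{\gamma\mu}$ collapses the total noise to $2\gamma\cdot\gamma^2\sigmarr^2\cdot\tfrac{1}{\gamma\mu} = \tfrac{2\gamma^2\sigmarr^2}{\mu}$. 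I expect the main conceptual obstacle to be precisely the fixed-point/nonexpansiveness reduction of the first two paragraphs; once that is established the inner loop is literally the vanilla-RR analysis and only the telescoping bookkeeping remains.
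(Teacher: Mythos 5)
Your proposal is correct and follows essentially the same route as the paper: the fixed-point identity $\prox_{\gamma n\psi}(x_\ast^n)=x_\ast$ plus nonexpansiveness of the prox to reduce to the vanilla-RR inner-loop recursion (the paper's Lemma~\ref{lemma:inner-loop}), followed by bounding each Bregman term by $\gamma^2\sigmarr^2$ and collapsing the nested geometric sums over $i$ and $t$ into a single sum of $nT$ terms bounded by $1/(\gamma\mu)$. The only cosmetic difference is that the paper substitutes the shuffling-radius bound inside the per-step lemma rather than at the end, which changes nothing.
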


We can convert the guarantee of Theorem~\ref{thm:f-strongly-convex-psi-convex} to a convergence rate by properly tuning the stepsize and using the upper bound of \Cref{thm:shuffling-radius-bound} on the shuffling radius. In particular, if we choose the stepsize as
$
\gamma = \min \pbr{ \frac{1}{L_{\max}}, \frac{\sqrt{\e \mu}}{\sqrt{2} \sigmarr} } ,$
then we obtain $\ecn{x_T - x_\ast} = \mathcal{O}\br{\e}$ provided that the total number of iterations $K_{\mathrm{RR}} = n T$ is at least
\begin{equation}
	\label{eq:proxrr-complexity-f-sc}
	\squeeze
        K_{\mathrm{RR}} \geq \biggl (\kappa +  \frac{\sqrt{\kappa n}}{\sqrt{\e} \mu} ( \sqrt{n} \norm{\nabla f(x_\ast)} + \sigmaesc )   \biggr) \log\br{\frac{2 r_0}{\e}},
\end{equation}
where $\kappa\eqdef L_{\max}/\mu$ and $r_0 \eqdef \|x_0 - x_\ast\|^2$.

{\bf Comparison with vanilla RR.} If there is no proximal operator, then $\norm{\nabla f(x_\ast)} = 0$ and we recover the earlier result of \citet{MKR2020rr} on the convergence of RR without proximal, which is optimal in $\e$ up to logarithmic factors. On the other hand, when the proximal operator is nonzero, we get an extra term in the complexity proportional to $\norm{\nabla f(x_\ast)}$: thus, even when all the functions are the same (i.e., $\sigmaesc = 0$), we do not recover the linear convergence of Proximal Gradient Descent \citep{Karimi2016,Beck2017}. This can be easily explained by the fact that \Cref{alg:proxrr} performs $n$ gradient steps per one proximal step. Hence, even if $f_1=\dotsb=f_n$, \Cref{alg:proxrr} does not reduce to Proximal Gradient Descent. We note that other algorithms for composite optimization which may not take a proximal step at every iteration (for example, using stochastic projection steps) also suffer from the same dependence~\citep{patrascu2020stochastic}.

{\bf Comparison with proximal SGD.} 
In order to compare \eqref{eq:proxrr-complexity-f-sc} against the complexity of Proximal SGD (Algorithm~\ref{alg:proxsgd}), we recall the following simple result on the convergence of Proximal SGD. The result is standard \cite{Needell2016,Gower2019}, with the exception that we present it in a slightly generalized in that we also consider the case when $\psi$ is strongly convex. Our proof is a minor modification of that in \cite{Gower2019}, and we offer it in the appendix for completeness.
\begin{algorithm}[t]
    \caption{Proximal SGD}
    \label{alg:proxsgd}
    \begin{algorithmic}[1]
    \Require Stepsizes $\gamma_k > 0$, initial vector $x_0 \in \R^d$, number of steps $K$
        \For{steps $k=0,1,\dotsc,K-1$}
            \State Sample $i_k$ uniformly at random from $[n]$
            \State $x_{k+1} = \prox_{\gamma_k \psi}(x_{k} - \gamma_k \nabla f_{i_k} (x_k))$
        \EndFor
    \end{algorithmic}
\end{algorithm}

\begin{Theorem}[Proximal SGD]
	\label{thm:conv-prox-sgd}
		Let \Cref{as:smooth_fi_proper_psi} hold. 
	Further, suppose that either $f \eqdef \frac{1}{n} \sum_{i=1}^{n} f_i$ is $\mu$-strongly convex or that $\psi$ is $\mu$-strongly convex. If Algorithm~\ref{alg:proxsgd} is run with a constant stepsize $\gamma_k = \gamma > 0$ satisfying $\gamma \leq \frac{1}{2 L_{\max}}$, then the final iterate returned by the algorithm after $K$ steps satisfies
	\[ \squeeze
	\ecn{x_K - x_\ast} \leq \br{1 - \gamma \mu}^{K} \sqn{x_0 - x_\ast} + \frac{2 \gamma \sigmaesc^2}{\mu}.  \] 
\end{Theorem}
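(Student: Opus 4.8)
The plan is to reduce everything to a single one-step contraction of the form $\mathbb{E}_k\sqn{x_{k+1}-x_\ast} \le \br{1-\gamma\mu}\sqn{x_k-x_\ast} + 2\gamma^2\sigmaesc^2$, where $\mathbb{E}_k$ denotes the expectation over $i_k$ conditioned on $x_k$, and then to unroll it as a geometric series. Indeed, since $\sum_{j=0}^{K-1}\br{1-\gamma\mu}^j \le \frac{1}{\gamma\mu}$, taking total expectations and iterating this recursion immediately gives $\ecn{x_K - x_\ast} \le \br{1-\gamma\mu}^K\sqn{x_0-x_\ast} + \frac{2\gamma\sigmaesc^2}{\mu}$, which is exactly the claimed bound. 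So all the real work lies in establishing the per-step inequality.

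First I would record the fixed-point characterization of the optimum: first-order optimality for \eqref{eq:finite-sum-min} gives $-\nabla f(x_\ast)\in\partial\psi(x_\ast)$, which is equivalent to $x_\ast = \prox_{\gamma\psi}\br{x_\ast - \gamma\nabla f(x_\ast)}$ for every $\gamma>0$. Writing the update as $x_{k+1} = \prox_{\gamma\psi}\br{x_k - \gamma\nabla f_{i_k}(x_k)}$ and subtracting the two prox expressions, I peel off the proximal operator using its (firm) nonexpansiveness. Here the two hypotheses separate: when $f$ is $\mu$-strongly convex I only use that $\prox_{\gamma\psi}$ is $1$-Lipschitz, whereas when $\psi$ is $\mu$-strongly convex I use the sharper contractivity $\sqn{\prox_{\gamma\psi}(a)-\prox_{\gamma\psi}(b)} \le \frac{1}{(1+\gamma\mu)^2}\sqn{a-b}$. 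In both cases the problem reduces to bounding the gradient-step residual $\sqn{\br{x_k-\gamma\nabla f_{i_k}(x_k)} - \br{x_\ast - \gamma\nabla f(x_\ast)}}$.

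Next I expand that squared residual and take $\mathbb{E}_k$, using unbiasedness $\mathbb{E}_k[\nabla f_{i_k}(x_k)] = \nabla f(x_k)$. The cross term becomes $-2\gamma\langle\nabla f(x_k)-\nabla f(x_\ast),\, x_k - x_\ast\rangle = -2\gamma\br{D_f(x_k,x_\ast)+D_f(x_\ast,x_k)}$, and the technical heart is the variance bound
\[
\mathbb{E}_k\sqn{\nabla f_{i_k}(x_k)-\nabla f(x_\ast)} \le 4L_{\max}D_f(x_k,x_\ast) + 2\sigmaesc^2,
\]
obtained by splitting $\nabla f_{i_k}(x_k)-\nabla f(x_\ast)$ into $\br{\nabla f_{i_k}(x_k)-\nabla f_{i_k}(x_\ast)}+\br{\nabla f_{i_k}(x_\ast)-\nabla f(x_\ast)}$, applying $\sqn{a+b}\le 2\sqn{a}+2\sqn{b}$, using the smoothness-plus-convexity inequality $\sqn{\nabla f_i(x)-\nabla f_i(x_\ast)}\le 2L_iD_{f_i}(x,x_\ast)$ on the first part, and recognizing the second part as $\sigmaesc^2$ by definition \eqref{eq:UG(*G(DG(*DGg87gf7ff}. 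Combining, the coefficient of $D_f(x_k,x_\ast)$ becomes $-2\gamma\br{1-2\gamma L_{\max}}$, which is precisely where the stepsize restriction $\gamma\le\frac{1}{2L_{\max}}$ does its job: it makes this coefficient nonpositive, so the Bregman term can be discarded.

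Finally I extract the contraction factor. In the $f$-strongly-convex case, strong convexity gives $D_f(x_\ast,x_k)\ge\frac{\mu}{2}\sqn{x_k-x_\ast}$, turning the surviving $-2\gamma D_f(x_\ast,x_k)$ into $-\gamma\mu\sqn{x_k-x_\ast}$ and producing the recursion directly. In the $\psi$-strongly-convex case $f$ is only convex, so I instead drop both nonnegative Bregman terms to reach $\mathbb{E}_k[\text{residual}]\le\sqn{x_k-x_\ast}+2\gamma^2\sigmaesc^2$, and then the prox contraction factor supplies the decay via the elementary inequality $\frac{1}{(1+\gamma\mu)^2}\le 1-\gamma\mu$, which holds whenever $\gamma\mu\le\frac12$ (guaranteed by $\gamma\le\frac{1}{2L_{\max}}$ in the regime $\mu\le L_{\max}$). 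The main obstacle is therefore not any single estimate but the bookkeeping that routes the strong convexity through two distinct mechanisms — the descent inequality versus the proximal contraction — so that both hypotheses terminate at the identical one-step recursion; checking that the $\psi$-strongly-convex branch still delivers the clean factor $\br{1-\gamma\mu}$ rather than $(1+\gamma\mu)^{-2}$ is the step that needs the most care.
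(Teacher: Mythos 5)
Your proposal is correct and follows essentially the same route as the paper's proof: peel off the proximal operator via the fixed-point property of $x_\ast$ and (non)expansiveness, expand the gradient-step residual, use unbiasedness, bound the variance term by $4L_{\max}D_f(x_k,x_\ast)+2\sigmaesc^2$ through the same splitting, discard the Bregman term thanks to $\gamma\le\frac{1}{2L_{\max}}$, and unroll the resulting geometric recursion. The only differences are cosmetic or additive: the paper proves only the $\psi$-strongly-convex branch (citing Gorbunov et al.\ for the other), whereas you handle both branches in one framework, and your contraction factor $(1+\gamma\mu)^{-2}$ is a slightly sharper variant of the paper's $(1+2\gamma\mu)^{-1}$ --- both reduce to $1-\gamma\mu$ under $\gamma\mu\le\tfrac12$, with the same implicit reliance on $\mu\le L_{\max}$ that the paper also makes.
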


Furthermore, by choosing the stepsize $\gamma$ as
$\gamma = \min \pbr{ \frac{1}{2 L_{\max}}, \frac{\e \mu}{4 \sigmaesc} } $, 
we get that $\ecn{x_K - x_\ast} = \mathcal{O}\br{\e}$ provided that the number of iterations is at least
\begin{equation}
	\label{eq:proxsgd-complexity}
	\squeeze
	K_{\mathrm{SGD}} \geq \left( \kappa + \frac{\sigmaesc^2}{\e \mu^2} \right) \log\br{\frac{2 r_0}{\e}}.
\end{equation}

By comparing between the iteration complexities $K_{\mathrm{SGD}}$ (given by \eqref{eq:proxsgd-complexity}) and $K_{\mathrm{RR}}$ (given by~\eqref{eq:proxrr-complexity-f-sc}), we see that ProxRR converges faster than Proximal SGD whenever the target accuracy $\e$ is small enough to satisfy
\[ \squeeze \e \leq \frac{1}{L_{\max} n \mu} \br{\frac{\sigmaesc^4}{n \sqn{\nabla f(x_\ast)} + \sigmaesc^2}}.  \]
Furthermore, the comparison is much better when we consider \emph{proximal iteration complexity} (number of proximal operator access), in which case the complexity of ProxRR~\eqref{eq:proxrr-complexity-f-sc} is reduced by a factor of $n$ (because we take one proximal step every $n$ iterations) while the proximal iteration complexity of Proximal SGD remains the same as~\eqref{eq:proxsgd-complexity}. In this case, ProxRR is better whenever the accuracy $\e$ satisfies
\begin{equation*}
	\begin{split} 
        \e &\squeeze \geq \frac{n}{L_{\max} \mu} \left [ n \sqn{\nabla f(x_\ast)} + \sigmaesc^2 \right ] \\ 
        \qquad \text {or}, 
		\qquad \e &\squeeze \leq \frac{n}{L_{\max} \mu} \left [ \frac{\sigmaesc^4}{n \sqn{\nabla f(x_\ast)} + \sigmaesc^2} \right ].
	\end{split}
\end{equation*}
Therefore we can see that if the target accuracy is large enough or small enough, and if the cost of proximal operators dominates the computation, ProxRR is much quicker to converge than Proximal SGD.

\section{Theory for strongly convex regularizer $\psi$}
In Theorem~\ref{thm:f-strongly-convex-psi-convex}, we assume that each $f_i$ is $\mu$-strongly convex. This is motivated by the common practice of using $\ell_2$ regularization in machine learning. However, applying $\ell_2$ regularization in every step of Algorithm~\ref{alg:proxrr} can be expensive when the data are sparse and the iterates $x_t^i$ are dense, because it requires accessing each coordinate of $x_t^i$ which can be much more expensive than computing sparse gradients $\nabla f_{i} (x_t^i)$. Alternatively, we may instead choose to put the $\ell_2$ regularization inside $\psi$ and only ask that $\psi$ be strongly convex---this way, we can save a lot of time as we need to access each coordinate of the dense iterates $x_t^i$ only once per epoch rather than every iteration. Theorem~\ref{thm:psi-strongly-convex-f-convex} gives a convergence guarantee in this setting.

\begin{Theorem}
	\label{thm:psi-strongly-convex-f-convex}
	Let \Cref{as:smooth_fi_proper_psi} be satisfied. Further, assume that $\psi$ is $\mu$-strongly convex. If Algorithm~\ref{alg:proxrr} is run with constant stepsize $\gamma_t = \gamma \leq \frac{1}{L_{\max}}$, where $L_{\max} = \max_i L_i$, then the iterates generated by the algorithm satisfy
\[\squeeze
			\ecn{x_T - x_\ast} \leq \br{1 + 2\gamma \mu n}^{-T} \sqn{x_0 - x_\ast} + \frac{ \gamma^2 \sigmarr^2}{\mu}.
\]
\end{Theorem}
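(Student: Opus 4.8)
The plan is to split each epoch of \Cref{alg:proxrr} into its $n$-step inner gradient pass and its single closing proximal step, and---because here the $f_i$ are only convex---to extract \emph{all} the contraction from the strongly convex prox. I would first record two structural facts. Summing a full permutation in \eqref{eq:x_ast_i} gives the permutation-independent point $x_\ast^n = x_\ast - \gamma n \nabla f(x_\ast)$, and the first-order optimality condition for \eqref{eq:finite-sum-min}, namely $-\nabla f(x_\ast)\in\partial\psi(x_\ast)$, is exactly the statement that $x_\ast - x_\ast^n \in \gamma n\,\partial\psi(x_\ast)$, i.e. $x_\ast = \prox_{\gamma n\psi}(x_\ast^n)$. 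Next, since $\gamma n\psi$ is $\gamma n\mu$-strongly convex, its subdifferential is $\gamma n\mu$-strongly monotone; feeding the inclusions $u - \prox_{\gamma n\psi}(u)\in\gamma n\,\partial\psi(\prox_{\gamma n\psi}(u))$ (and similarly for $v$) into strong monotonicity and Cauchy--Schwarz yields the contraction $\|\prox_{\gamma n\psi}(u)-\prox_{\gamma n\psi}(v)\|\le (1+\gamma n\mu)^{-1}\|u-v\|$. Applying this with $u=x_t^n$, $v=x_\ast^n$, and $x_{t+1}=\prox_{\gamma n\psi}(x_t^n)$ gives
\[
\sqn{x_{t+1}-x_\ast}\le (1+\gamma n\mu)^{-2}\sqn{x_t^n - x_\ast^n}\le (1+2\gamma n\mu)^{-1}\sqn{x_t^n-x_\ast^n},
\]
where the last step merely discards the nonnegative $(\gamma n\mu)^2$ term in the denominator to match the stated rate.

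It then remains to bound the inner pass. Writing $x_t^{i+1}-x_\ast^{i+1}=(x_t^i-x_\ast^i)-\gamma(\nabla f_{\pi_i}(x_t^i)-\nabla f_{\pi_i}(x_\ast))$ and expanding the square, I would lower-bound the cross term by the three-point convexity identity $\langle \nabla f_{\pi_i}(x_t^i)-\nabla f_{\pi_i}(x_\ast),\,x_t^i-x_\ast^i\rangle \ge D_{f_{\pi_i}}(x_t^i,x_\ast)-D_{f_{\pi_i}}(x_\ast^i,x_\ast)$, and upper-bound the squared-gradient term by the smoothness inequality $\|\nabla f_{\pi_i}(x_t^i)-\nabla f_{\pi_i}(x_\ast)\|^2\le 2L_{\pi_i}D_{f_{\pi_i}}(x_t^i,x_\ast)$. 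Using $\gamma L_{\pi_i}\le \gamma L_{\max}\le 1$, the two contributions involving $D_{f_{\pi_i}}(x_t^i,x_\ast)$ cancel exactly, leaving $\sqn{x_t^{i+1}-x_\ast^{i+1}}\le \sqn{x_t^i-x_\ast^i}+2\gamma D_{f_{\pi_i}}(x_\ast^i,x_\ast)$. Telescoping over $i=0,\dots,n-1$ (with $x_t^0=x_t$, $x_\ast^0=x_\ast$), taking $\mathbb{E}_\pi$, and invoking Definition~\ref{def:bregman-div-noise}, which bounds each $\mathbb{E}_\pi[D_{f_{\pi_i}}(x_\ast^i,x_\ast)]\le \gamma^2\sigmarr^2$, yields
\[
\mathbb{E}_\pi[\sqn{x_t^n-x_\ast^n}\mid x_t]\le \sqn{x_t-x_\ast}+2\gamma^3 n\,\sigmarr^2.
\]

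Combining the two displays and taking total expectation (using independence of the epoch permutations for ProxRR, with ProxSO handled identically) produces the one-step recursion $\ecn{x_{t+1}-x_\ast}\le (1+2\gamma n\mu)^{-1}\br{\ecn{x_t-x_\ast}+2\gamma^3 n\sigmarr^2}$. Unrolling over $t=0,\dots,T-1$ and summing the geometric series $\sum_{k\ge 0}(1+2\gamma n\mu)^{-k}=\tfrac{1+2\gamma n\mu}{2\gamma n\mu}$, the accumulated noise becomes $\tfrac{2\gamma^3 n\sigmarr^2}{2\gamma n\mu}=\tfrac{\gamma^2\sigmarr^2}{\mu}$, giving exactly the claimed bound $\ecn{x_T-x_\ast}\le (1+2\gamma\mu n)^{-T}\sqn{x_0-x_\ast}+\tfrac{\gamma^2\sigmarr^2}{\mu}$.

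I expect the crux to be the inner-pass accounting rather than the unrolling: the three-point identity must be paired with the smoothness bound so that the ``progress'' divergences $D_{f_{\pi_i}}(x_t^i,x_\ast)$ cancel \emph{identically} (this is precisely where $\gamma\le 1/L_{\max}$ is used), leaving only the limit-point divergences $D_{f_{\pi_i}}(x_\ast^i,x_\ast)$ that define $\sigmarr^2$. The conceptual point to keep in mind---and the main difference from \Cref{thm:f-strongly-convex-psi-convex}---is that the merely convex $f_i$ make the inner pass only non-expansive, so the entire $(1+2\gamma n\mu)^{-1}$ contraction must come from the strongly convex proximal step; verifying that prox contraction and routing the permutation expectation through \Cref{lemma:sampling-without-replacement} are then the only remaining ingredients.
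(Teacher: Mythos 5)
Your proposal is correct and follows essentially the same route as the paper's proof: a non-expansive inner-pass recursion obtained by pairing the three-point identity with the smoothness bound (the paper's Lemma~\ref{lemma:inner-loop} with $\lambda=0$), the fixed-point identity $x_\ast=\prox_{\gamma n\psi}(x_\ast^n)$, the $(1+2\gamma\mu n)^{-1}$ prox contraction from strong monotonicity, and the same geometric-series unrolling. The only cosmetic differences are that you derive the contraction via Cauchy--Schwarz as $(1+\gamma n\mu)^{-2}\le(1+2\gamma n\mu)^{-1}$ where the paper gets $(1+2\gamma\mu n)^{-1}$ directly, and that the $D_{f_{\pi_i}}(x_t^i,x_\ast)$ terms are dominated (coefficient $-2\gamma(1-\gamma L_{\pi_i})\le 0$) rather than cancelling exactly; neither affects the argument.
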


By making a specific choice for the stepsize used by Algorithm~\ref{alg:proxrr}, we can obtain a convergence guarantee using Theorem~\ref{thm:psi-strongly-convex-f-convex}. Choosing the stepsize as
\begin{equation}
	\label{eq:stepsize-choice}
	\squeeze
	\gamma = \min \pbr{ \frac{1}{L_{\max}}, \frac{\sqrt{\e \mu}}{\sigmarr} }.
\end{equation}

Then $\ecn{x_T - x_\ast} = \mathcal{O}\br{\e}$ provided that the total number of iterations satisfies
\begin{equation}
	\label{eq:convergence-psi-sc-f-c}
	\squeeze
	K \geq \left ( \kappa + \frac{\sigmarr/\mu}{\sqrt{\e \mu}} + n \right ) \log\br{\frac{2 r_0}{\e}}.
\end{equation}
This can be converted to a bound similar to \eqref{eq:proxrr-complexity-f-sc} by using Theorem~\ref{thm:shuffling-radius-bound}, in which case the only difference between the two cases is an extra $n \log\br{\frac{1}{\e}}$ term when only the regularizer $\psi$ is $\mu$-strongly convex. Since for small enough accuracies the $1/\sqrt{\e}$ term dominates, this difference is minimal.

\section{Extensions}\label{sec:extensions}

Before turning to applications, we discuss two extensions to the theory that significantly matter in practice: using decreasing stepsizes and applying importance resampling.  

{\bf Decreasing stepsizes.}
Using the theoretical stepsize \eqref{eq:stepsize-choice} requires knowing the desired accuracy $\e$ ahead of time as well as estimating $\sigmarr$. It also results in extra polylogarithmic factors in the iteration complexity~\eqref{eq:convergence-psi-sc-f-c}, a phenomenon observed and fixed by using decreasing stepsizes in both vanilla RR~\citep{Ahn2020} and in SGD~\citep{Stich2019b}. We show that we can adopt the same technique to our setting. However, we depart from the stepsize scheme of \citet{Ahn2020} by only varying the stepsize \emph{once per epoch} rather than every iteration. This is closer to the common practical heuristic of decreasing the stepsize once every epoch or once every few epochs~\citep{Sun2019, tran2020shuffling}. The stepsize scheme we use is inspired by the schemes of \citep{Stich2019b,ES-SGD-nonconvex}: in particular, we fix $T > 0$, let $t_0 = \ceil{T/2}$, and choose the stepsizes $\gamma_t > 0$ by
\begin{equation}
	\label{def:dec-stepsizes}
	\gamma_{t} = \begin{cases}
		\frac{1}{L_{\max}} & \text { if } T \leq \frac{L_{\max}}{2 \mu n} \text { or } t \leq t_0, \\
		\frac{7}{\mu n \br{s + t - t_0}} & \text { if } T > \frac{L_{\max}}{2 \mu n} \text { and } t > t_0,
	\end{cases}
\end{equation}
where $s \eqdef 7 L_{\max}/(4 \mu n)$. Hence, we fix the stepsize used in the first $T/2$ iterations and then start decreasing it every epoch afterwards. Using this stepsize schedule, we can obtain the following convergence guarantee when each $f_i$ is smooth and convex and the regularizer $\psi$ is $\mu$-strongly convex.

\begin{Theorem}
	\label{thm:psi-strongly-cvx-dec-stepsizes}
	Suppose that each $f_i$ is $L_{\max}$-smooth and convex, and that the regularizer $\psi$ is $\mu$-strongly convex. Fix $T > 0$. Then choosing stepsizes $\gamma_t$ according to \eqref{def:dec-stepsizes} we have that $\gamma_t \leq \frac{1}{L_{\max}}$ for all $t$ and the final iterate generated by Algorithm~\ref{alg:proxrr} satisfies
	\[
		\squeeze
			\ecn{x_{T} - x_\ast} = \mathcal{O}\br{ \exp\br{-\frac{n T}{\kappa + 2n}} r_0 + \frac{\sigmarr^2}{\mu^3 n^2 T^2}},
	\]
	where $\kappa\eqdef L_{\max}/\mu$, $r_0 \eqdef \|x_0 - x_\ast\|^2$ and $\mathcal{O}(\cdot)$ hides absolute (non-problem-specific) constants.
\end{Theorem}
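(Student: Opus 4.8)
The plan is to reduce everything to a single one-epoch recursion and then run the decreasing-stepsize machinery of \citet{Stich2019b}, adapted to the fact that our schedule \eqref{def:dec-stepsizes} changes the stepsize only once per epoch. First I would extract from the proof of \Cref{thm:psi-strongly-convex-f-convex}, before the geometric series is collapsed, the per-epoch inequality
\[ \br{1 + 2 \gamma_t \mu n}\, r_{t+1} \le r_t + 2 \gamma_t^3 \sigmarr^2 n, \qquad r_t \eqdef \ecn{x_t - x_\ast}, \]
valid whenever $\gamma_t \le 1/L_{\max}$; taking $\gamma_t \equiv \gamma$ and summing the resulting geometric series recovers exactly the neighbourhood $\gamma^2 \sigmarr^2 / \mu$ of that theorem, which is what fixes the constant in the noise term. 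The decisive structural feature is that the noise scales like $\gamma_t^3$ rather than $\gamma_t^2$; this cubic dependence is ultimately what buys the $\cO(1/T^2)$ rate in place of the $\cO(1/T)$ rate of SGD.

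Next I would telescope multiplicatively. Setting $\Pi_t \eqdef \prod_{j=0}^{t-1} \br{1 + 2 \gamma_j \mu n}$, multiplying the per-epoch inequality by $\Pi_t$ (so that $\Pi_{t+1} r_{t+1} \le \Pi_t r_t + 2\sigmarr^2 n\, \Pi_t \gamma_t^3$) and summing over $t = 0, \dots, T-1$ yields
\[ r_T \le \frac{r_0}{\Pi_T} + \frac{2 \sigmarr^2 n}{\Pi_T} \sum_{t=0}^{T-1} \Pi_t \gamma_t^3. \]
I prefer this exact product telescoping to the usual linearization $\tfrac{1}{1+x} \le 1 - \tfrac{x}{2}$, since the latter requires $2 \gamma_t \mu n \le 1$, which can fail when $n$ is large relative to $\kappa$. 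The first term is then immediate: all factors of $\Pi_T$ exceed $1$, and the first $t_0 = \lceil T/2 \rceil$ of them equal $1 + 2n/\kappa$ because the constant phase uses $\gamma_t = 1/L_{\max}$; hence $\Pi_T \ge \br{1 + 2n/\kappa}^{t_0}$, and combined with $\log\br{1 + 2n/\kappa} \ge \tfrac{2n}{\kappa + 2n}$ and $t_0 \ge T/2$ this gives $r_0 / \Pi_T \le \exp\br{-\tfrac{nT}{\kappa + 2n}} r_0$, precisely the first term in the claim.

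The main obstacle is the weighted noise sum $\tfrac{2 \sigmarr^2 n}{\Pi_T} \sum_t \Pi_t \gamma_t^3$, and it is here that the specific constants $7$ and $s = 7 L_{\max}/(4\mu n)$ in \eqref{def:dec-stepsizes} are spent. Writing $k = t - t_0$ in the decreasing phase we have $2\gamma_t \mu n = \tfrac{14}{s+k}$ and $\gamma_t^3 = \tfrac{343}{\mu^3 n^3 (s+k)^3}$, so $\Pi_t / \Pi_T = \prod_{j=k}^{K-1} \br{1 + \tfrac{14}{s+j}}^{-1}$ with $K = T - t_0$; this product telescopes into a ratio of shifted factorials that is bounded above by $\br{\tfrac{s+k}{s+K}}^{c}$ for a constant $c$ close to $14$. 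Substituting reduces the decreasing-phase part of the sum to $\tfrac{\mathrm{const}}{\mu^3 n^3 (s+K)^{c}} \sum_k (s+k)^{c-3}$, and the elementary estimate $\sum_{k=0}^{K-1}(s+k)^{c-3} \le \tfrac{(s+K)^{c-2}}{c-2}$ collapses it to $\cO\br{\tfrac{1}{\mu^3 n^3 (s+K)^2}}$. Since $s + K = \Theta(T)$ and the prefactor is $2 \sigmarr^2 n$, this is exactly $\cO\br{\tfrac{\sigmarr^2}{\mu^3 n^2 T^2}}$; the constant-phase terms $t < t_0$ carry the even smaller factor $\Pi_t/\Pi_T \le (s/(s+K))^{c}$ and are therefore dominated. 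Getting the power $c$ large enough (comfortably above $2$) so that this estimate holds uniformly in $k$ is the delicate part, and is the reason the numerator $7$ and the offset $s$ are coupled as they are.

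Two loose ends remain. In the degenerate regime $T \le L_{\max}/(2\mu n)$ the schedule never decreases, so I would simply invoke \Cref{thm:psi-strongly-convex-f-convex} with $\gamma = 1/L_{\max}$ and observe that $\mu n T \le L_{\max}/2$ forces its neighbourhood $\sigmarr^2/(\mu L_{\max}^2)$ to be dominated by the claimed $\sigmarr^2/(\mu^3 n^2 T^2)$, so the bound holds a fortiori. Finally, the admissibility $\gamma_t \le 1/L_{\max}$ asserted in the theorem is read off directly from \eqref{def:dec-stepsizes} (an equality in the constant phase, and a consequence of the lower bound on the denominator $s + t - t_0$ in the decreasing phase). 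Assembling the two displayed bounds then completes the proof.
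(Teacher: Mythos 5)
Your proposal is correct and follows essentially the same route as the paper: the identical per-epoch recursion $(1+2\gamma_t\mu n)\,r_{t+1}\le r_t + 2n\gamma_t^3\sigmarr^2$ is fed into a two-phase (constant, then decreasing) analysis, and your product telescoping with the bound $\Pi_t/\Pi_T \le \bigl(\tfrac{s+k}{s+K}\bigr)^{c}$ is exactly the paper's cubic-weight telescoping in Lemma~\ref{lemma:decreasing-stepsizes-recursion-solution} (weights $w_t=(s+t-t_0)^3$, i.e.\ $c=3$). The one imprecision is your claim that $c$ is ``close to $14$'': uniformly over $k\ge 1$ the admissible exponent is capped near $\log_2 15\approx 3.9$ (from the constraint $1+14v\ge(1+v)^{c}$ at $v=1$), but $c=3$ always holds because $(1+v)^3\le 1+7v\le 1+14v$ for $v\in(0,1]$, and $c>2$ is all your summation step needs, so the argument closes.
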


This guarantee holds for any number of epochs $T > 0$. We believe a similar guarantee can be obtained in the case each $f_i$ is strongly-convex and the regularizer $\psi$ is just convex, but we did not include it as it adds little to the overall message.

{\bf Importance resampling.}
Suppose that each $f_i$ is $L_i$-smooth. Then the iteration complexities of both SGD and RR depend on $L_{\max}/\mu$, where $L_{\max}$ is the maximum smoothness constant among the smoothness constants $L_1, L_2, \ldots, L_n$. The maximum smoothness constant can be arbitrarily worse than the average smoothness constant $\Lave = \frac{1}{n} \sum_{i=1}^{n} L_i$. This situation is in contrast to the complexity of gradient descent which depends on the smoothness constant $L_{f}$ of $f = \frac{1}{n} \sum_{i=1}^{n} f_i$, for which we have $L_{f} \leq \Lave$. This is a problem commonly encountered with stochastic optimization methods and may cause significantly degraded performance in practical optimization tasks in comparison with deterministic methods~\citep{tang2019practicality}.

\emph{Importance sampling} is a common technique to improve the convergence of SGD (Algorithm~\ref{alg:proxsgd}): we sample function $\frac{\Lave}{L_i}f_i$ with probability $p_i$ proportional to $L_i$, where $\Lave\eqdef \frac{1}{n}\sum_{i=1}^n L_i$. In that case, the SGD update is still unbiased since 
\[\squeeze \E_i \left[ \frac{\Lave}{L_i}f_i \right] = \sum \limits_{i=1}^n p_i \frac{\Lave}{L_i}f_i=f .\]
Moreover, the smoothness of function $\frac{\Lave}{L_i}f_i$ is $\Lave$ for any $i$, so the guarantees would depend on $\Lave$ instead of $\max_{i=1,\dotsc, n}L_i$. Importance sampling successfully improves the iteration complexity of SGD to depend on $\Lave$~\citep{Needell2016}, and has been investigated in a wide variety of settings~\citep{Gower2018,Gorbunov2020}.

Importance sampling is a neat technique but it relies heavily on the fact that we use \emph{unbiased} sampling. How can we obtain a similar result if inside any permutation the sampling is biased? The answer requires us to think again as to what happens when we replace $f_i$ with $\frac{\Lave}{L_i}f_i$. To make sure the problem remains the same, it is sufficient to have $\frac{\Lave}{L_i}f_i$ inside a permutation exactly $\frac{L_i}{\Lave}$ times. And since $\frac{L_i}{\Lave}$ is not necessarily integer, we should use $n_i=\Bigl\lceil\frac{L_i}{\Lave}\Bigr \rceil$ and solve
\begin{equation}
\squeeze
	\min \limits_{x\in\mathbb{R}^d} \frac{1}{N}\sum \limits_{i=1}^n \Bigl(\underbrace{\tfrac{1}{n_i}f_i(x) +\dotsb + \tfrac{1}{n_i}f_i(x)}_{n_i \text{ times}}  \Bigr) + \psi(x), \label{eq:importance}
\end{equation}
where $N\eqdef n_1+\dotsb +n_n = \Bigl\lceil\frac{L_1}{\Lave}\Bigr \rceil + \dotsb + \Bigl\lceil\frac{L_n}{\Lave}\Bigr \rceil$. Clearly, this problem is equivalent to the original formulation in~\ref{eq:finite-sum-min}. At the same time, we have improved all smoothness constants to $\Lave$. It might seem that that the new problem has more functions, but it turns out that the new number of functions satisfies $N\le 2n$, so any related costs, such as longer loops or storing duplicates of the data, are negligible, as the next theorem shows.

\begin{Theorem}\label{thm:IS}
	For every $i$, assume  that each $f_i$ is convex and $L_i$-smooth, and let $\psi$ be $\mu$-strongly convex. Then, the number of functions $N$ in~\eqref{eq:importance} satisfies $N\le 2n$, and Algorithm~\ref{alg:proxrr} applied to problem~\eqref{eq:importance} has the same complexity as \eqref{eq:convergence-psi-sc-f-c} but proportional to $\Lave$ rather than $L_{\max}$.
\end{Theorem}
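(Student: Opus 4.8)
The plan is to prove the two assertions separately. The bound $N\le 2n$ is purely combinatorial, whereas the complexity claim reduces to applying \Cref{thm:psi-strongly-convex-f-convex} (and its consequence \eqref{eq:convergence-psi-sc-f-c}) to the reformulated problem \eqref{eq:importance} and tracking how the constants transform. For the first claim, I would bound each $n_i$ termwise using $\lceil x\rceil\le x+1$. Since $n_i=\lceil L_i/\Lave\rceil$, this gives
\[
N=\sum_{i=1}^n n_i\le\sum_{i=1}^n\Bigl(\frac{L_i}{\Lave}+1\Bigr)=\frac{1}{\Lave}\sum_{i=1}^n L_i+n=2n,
\]
where the last step uses $\Lave=\frac1n\sum_i L_i$, so that $\frac{1}{\Lave}\sum_i L_i=n$.

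For the complexity claim, the crucial observation is a smoothness computation. Each constituent function of the finite sum in \eqref{eq:importance} is $\frac{1}{n_i}f_i$, which is $\frac{L_i}{n_i}$-smooth because $f_i$ is $L_i$-smooth. Using $n_i=\lceil L_i/\Lave\rceil\ge L_i/\Lave$, we obtain $\frac{L_i}{n_i}\le\frac{L_i}{L_i/\Lave}=\Lave$. Hence every function forming the reformulated finite sum is $\Lave$-smooth (and convex), while the regularizer $\psi$ is unchanged and still $\mu$-strongly convex, and by construction the problem \eqref{eq:importance} is equivalent to \eqref{eq:finite-sum-min} (same solution $x_\ast$). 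Therefore the reformulation satisfies all hypotheses of \Cref{thm:psi-strongly-convex-f-convex}, with the role of $L_{\max}$ now played by $\Lave$ and the role of $n$ by $N$.

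It then remains to substitute into \eqref{eq:convergence-psi-sc-f-c}. The conditioning factor $\kappa=L_{\max}/\mu$ becomes $\Lave/\mu$; the additive $n$ becomes $N$, which by the first part satisfies $N\le 2n$ and so inflates the bound by at most a constant; and the shuffling-radius term is handled by applying \Cref{thm:shuffling-radius-bound} to the reformulated problem, again with $L_{\max}\to\Lave$ and $n\to N$. Collecting these substitutions yields the complexity \eqref{eq:convergence-psi-sc-f-c} but proportional to $\Lave$ rather than $L_{\max}$, as claimed.

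I expect the main obstacle to be the smoothness step rather than the arithmetic that follows it. The choice $n_i=\lceil L_i/\Lave\rceil$ must simultaneously keep $N\le 2n$ (which favors small $n_i$) and force $\frac{L_i}{n_i}\le\Lave$ (which favors large $n_i$), and the ceiling is precisely the rounding that balances these opposing requirements; one also has to check that the rescaling used to build \eqref{eq:importance} leaves the averaged objective, and hence $x_\ast$, intact. Everything after that is bookkeeping, since the convergence guarantee itself is inherited verbatim from \Cref{thm:psi-strongly-convex-f-convex}.
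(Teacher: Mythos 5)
Your proposal is correct and follows essentially the same route as the paper: the bound $N\le 2n$ via $\lceil a\rceil\le a+1$ is verbatim the paper's computation, and the complexity claim is obtained, as in the paper, by invoking \Cref{thm:psi-strongly-convex-f-convex} on the reformulation. The only difference is that you spell out the step the paper calls trivial, namely that each $\frac{1}{n_i}f_i$ is $\frac{L_i}{n_i}\le\Lave$-smooth because $n_i\ge L_i/\Lave$, which is a useful addition rather than a deviation.
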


\section{Federated learning} \label{sec:FL}

Let us consider now the problem of minimizing the average of $N= \sum_{m=1}^M N_m$ functions that are stored on $M$ devices, which have $N_1,\dotsc, N_M$ samples correspondingly,
\[
\squeeze
	\min \limits_{x\in\R^d} \frac{1}{N}\smash{\sum \limits_{m=1}^M} F_m(x) + R(x), \quad
	F_m(x) = \smash{\sum \limits_{j=1}^{N_m}} f_{mj}(x).\label{eq:fed_learning}
\]
For example, $f_{mj}(x)$ can be the loss associated with a single sample $(X_{mj}, y_{mj})$, where pairs $(X_{mj}, y_{mj})$ follow a distribution $D_m$ that is specific to device $m$. An important instance of such formulation is federated learning, where $M$ devices train a shared model by communicating periodically with a server. We normalize the objective in~\eqref{eq:fed_learning} by $N$ as this is the total number of functions after we expand each $F_m$ into a sum. We denote the solution of~\eqref{eq:fed_learning} by $x_\ast$.

{\bf Extending the space.}
To rewrite the problem as an instance of~\eqref{eq:finite-sum-min}, we are going to consider a bigger product space, which is sometimes used in distributed optimization~\citep{bianchi2015coordinate}. Let us define $n\eqdef \max\{N_1, \dotsc, N_m\}$ and introduce $\psi_C$, the \emph{consensus} constraint,
\[
	\psi_C(x_1,\dotsc, x_M)
	= \begin{cases}
	0, & x_1=\dotsb= x_M\\
	+\infty, & \text{otherwise}
	\end{cases}.
\]
By introducing dummy variables $x_1,\dotsc, x_M$ and adding the constraint $x_1=\dotsb =x_M$, we arrive at the intermediate problem
\[\squeeze
	\min \limits_{x_1,\dotsc, x_M\in \R^p} \frac{1}{N}\sum \limits_{m=1}^M F_m(x_m) + (R + \psi_C)(x_1, \dotsc, x_M),
\]
where $R+\psi_C$ is defined, with a slight abuse of notation, as
\[
	(R+\psi_C)(x_1,\dotsc, x_M)
	= \begin{cases}
	R(x_1), & x_1=\dotsb= x_M\\
	+\infty, & \text{otherwise}.
	\end{cases}
\]
Since we have replaced $R$ with a more complicated regularizer $R+\psi_C$, we need to understand how to compute the proximal operator of the latter. We show (\Cref{lem:ext-proximal-operator} in the supplementary) that the proximal operator of $(R+\psi_C)$ is merely the projection onto $\{(x_1,\dotsc, x_M) \mid x_1=\dotsb = x_M\}$ followed by the proximal operator of $R$ with a smaller stepsize.

{\bf Reformulation.}
To have $n$ functions in every $F_m$, we write $F_m$ as a sum with extra $n-N_m$ zero functions, $f_{mj}(x)\equiv 0$ for any $ j > N_m$, so that
\[
\squeeze
	F_m(x_m) = \smash{\sum \limits_{j=1}^n f_{mj}(x_m) = \sum \limits_{j=1}^{N_m} f_{mj}(x_m)+\sum \limits_{j=N_m+1}^n 0.}
\]
We can now stick the vectors together into $\xx=(x_1,\dotsc, x_M)\in\R^{M\cdot d}$ and multiply the objective by $\frac{N}{n}$, which gives the following reformulation:
\begin{equation}
\squeeze
	\min \limits_{\xx\in\R^{M\cdot d}} \smash{\frac{1}{n} \sum \limits_{i=1}^n} f_i(\xx) + \psi(\xx), \label{eq:fed_reformulation}
\end{equation}
where $ \psi(\xx)\eqdef \frac{N}{n}(R+\psi_C)$ and
\begin{align*}
	&\squeeze f_i(\xx) = f_i(x_1,\dotsc, x_M) \eqdef \smash{\sum \limits_{m=1}^M f_{mi}(x_m)}.
\end{align*}
In other words, function $f_i(\xx)$ includes $i$-th data sample from each device and contains at most one loss from every device, while $F_m(x)$ combines all data losses on device $m$. Note that the solution of~\eqref{eq:fed_reformulation} is $\xx_\ast\eqdef (x_\ast^\top, \dotsc, x_\ast^\top)^\top$ and the gradient of the extended function $f_i(\xx)$ is given by
\[\squeeze
	\nabla f_i (\xx)	= (
		\nabla f_{1i}(x_1)^\top,
		\cdots , 
		\nabla f_{Mi}(x_M)^\top )^\top
\]
Therefore, a stochastic gradient step that uses $\nabla f_i(\xx)$ corresponds to updating all local models with the gradient of $i$-th data sample, without any communication.

\Cref{alg:proxrr} for this specific problem can be written in terms of $x_1,\dotsc, x_M$, which results in \Cref{alg:fed_rr}. Note that since $f_{mi}(x_i)$ depends only on $x_i$, computing its gradient does not require communication. Only once the local epochs are finished, the vectors are averaged as the result of projecting onto the set $\{(x_1,\dotsc, x_M) \mid x_1=\dotsb = x_M\}$. The full description of our FedRR is given in~\Cref{alg:fed_rr}.
\begin{algorithm}[t]
    \caption{Federated Random Reshuffling \rrbox{(FedRR)} and Shuffle-Once \sobox{(FedSO)}}
    \label{alg:fed_rr}
\begin{algorithmic}[1]
   \Require Stepsize $\gamma > 0$, initial vector $x_0 = x_0^0 \in \R^d$, number of epochs $T$
   \State For each $m$, sample permutation $\prm{0, m}, \prm{1, m}, \ldots, \prm{N_m-1, m}$ of $\{ 1, 2, \ldots, N_m \}$ \sobox{(Only FedSO)}
    \For{epochs $t=0,1,\dotsc,T-1$}
    	  \For{$m=1,\dotsc, M$ locally in parallel}
    	  \State $x_{t, m}^0=x_t$
    	  \State Sample permutation $\prm{0, m}, \prm{1, m}, \ldots, \prm{N_m-1, m}$ of $\{ 1, 2, \ldots, N_m \}$ \rrbox{(Only FedRR)}
       \For{$i=0, 1, \ldots, N_m-1$}
          \State $x_{t, m}^{i+1} = x_{t, m}^{i} - \gamma \nabla f_{\prm{i, m}} (x_{t, m}^i)$
       \EndFor
       \State $x_{t, m}^n = x_{t,m}^{N_m}$
       \EndFor
       \State $z_{t+1}=\frac{1}{M}\sum_{m=1}^M x_{t, m}^n$;  $x_{t+1}=\prox_{\gamma \frac{N}{M} R}(z_{t+1})$
    \EndFor
\end{algorithmic}
\end{algorithm}

{\bf Reformulation properties.}
To analyze FedRR, the only thing that we need to do is understand the properties of the reformulation~\eqref{eq:fed_reformulation} and then apply~\Cref{thm:f-strongly-convex-psi-convex} or \Cref{thm:psi-strongly-convex-f-convex}. The following lemma gives us the smoothness and strong convexity properties of \eqref{eq:fed_reformulation}.
\begin{Lemma}\label{lem:fed_reform_properties}
	Let function $f_{mi}$ be $L_i$-smooth and $\mu$-strongly convex for every $m$. Then, $f_i$ from reformulation~\eqref{eq:fed_reformulation} is $L_i$-smooth and $\mu$-strongly convex.
\end{Lemma}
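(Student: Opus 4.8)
The plan is to exploit the fact that $f_i(\xx)=\sum_{m=1}^M f_{mi}(x_m)$ is \emph{separable} across the $M$ coordinate blocks of $\xx=(x_1,\dots,x_M)\in\R^{M\cdot d}$: each summand $f_{mi}$ depends only on its own block $x_m$, and the Euclidean norm on the product space decomposes additively as $\|\xx\|^2=\sum_{m=1}^M\|x_m\|^2$. Both claimed properties then follow by summing the corresponding per-block inequalities, and the key point is that the constants $L_i$ and $\mu$ propagate \emph{unchanged}, because all $M$ blocks share the same smoothness and strong-convexity moduli.

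First I would establish $L_i$-smoothness. Using the block form of the gradient already recorded in the excerpt, $\nabla f_i(\xx)=(\nabla f_{1i}(x_1)^\top,\dots,\nabla f_{Mi}(x_M)^\top)^\top$, for any $\xx,\yy$ I would write
\[
\|\nabla f_i(\xx)-\nabla f_i(\yy)\|^2=\sum_{m=1}^M\|\nabla f_{mi}(x_m)-\nabla f_{mi}(y_m)\|^2\le\sum_{m=1}^M L_i^2\|x_m-y_m\|^2=L_i^2\|\xx-\yy\|^2,
\]
where the inequality uses the $L_i$-Lipschitzness of each $\nabla f_{mi}$. Taking square roots shows that $\nabla f_i$ is $L_i$-Lipschitz, i.e.\ that $f_i$ is $L_i$-smooth.

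For $\mu$-strong convexity I would sum the per-block strong-convexity inequalities. Since each $f_{mi}$ satisfies $f_{mi}(y_m)\ge f_{mi}(x_m)+\langle\nabla f_{mi}(x_m),y_m-x_m\rangle+\tfrac{\mu}{2}\|y_m-x_m\|^2$, adding these over $m=1,\dots,M$ gives
\[
f_i(\yy)\ge f_i(\xx)+\sum_{m=1}^M\langle\nabla f_{mi}(x_m),y_m-x_m\rangle+\frac{\mu}{2}\sum_{m=1}^M\|y_m-x_m\|^2=f_i(\xx)+\langle\nabla f_i(\xx),\yy-\xx\rangle+\frac{\mu}{2}\|\yy-\xx\|^2,
\]
where the last equality again uses the block decompositions of both $\nabla f_i$ and the product norm. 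This is exactly the defining inequality of $\mu$-strong convexity for $f_i$.

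There is no substantive obstacle here; the result is essentially a bookkeeping exercise about separable functions. The only point requiring a little care is the product-space norm: one must confirm that the natural norm on $\R^{M\cdot d}$ satisfies $\|\xx\|^2=\sum_{m}\|x_m\|^2$, so that the per-block bounds assemble with the \emph{same} constants. This is in contrast to forming an \emph{average} of the $f_{mi}$, which would rescale the moduli; here we take a direct sum over disjoint coordinate blocks, so $L_i$ and $\mu$ carry over verbatim.
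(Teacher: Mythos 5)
Your proof is correct and rests on the same core observation as the paper's: $f_i$ is separable across the $M$ blocks and the product-space Euclidean norm decomposes additively, so the constants carry over unchanged. The smoothness half is essentially identical to the paper's (both bound $\|\nabla f_i(\xx)-\nabla f_i(\yy)\|^2=\sum_m\|\nabla f_{mi}(x_m)-\nabla f_{mi}(y_m)\|^2$ blockwise). Where you diverge is the strong-convexity half: the paper derives the lower bound $\mu\|\xx-\yy\|\le\|\nabla f_i(\xx)-\nabla f_i(\yy)\|$ by the same blockwise argument and asserts that this two-sided gradient bound \emph{is} $\mu$-strong convexity, whereas you sum the first-order strong-convexity inequalities $f_{mi}(y_m)\ge f_{mi}(x_m)+\langle\nabla f_{mi}(x_m),y_m-x_m\rangle+\tfrac{\mu}{2}\|y_m-x_m\|^2$ over $m$ and obtain the defining inequality for $f_i$ directly. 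Your route is the more careful one: a lower bound $\|\nabla\phi(x)-\nabla\phi(y)\|\ge\mu\|x-y\|$ is a necessary but not in general sufficient condition for $\mu$-strong convexity (the standard characterization is the monotonicity inequality $\langle\nabla\phi(x)-\nabla\phi(y),x-y\rangle\ge\mu\|x-y\|^2$, which implies the norm bound by Cauchy--Schwarz but is not implied by it), so the paper's phrasing glosses over a step that your direct summation handles cleanly. Both arguments reach the correct conclusion; yours buys a proof that is verbatim the definition, at no extra cost.
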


The previous lemma shows that the conditioning of the reformulation is $\kappa=\frac{L_{\max}}{\mu}$ just as we would expect. Moreover, it implies that the requirement on the stepsize remains exactly the same: $\gamma\le \frac{1}{L_{\max}}$. What remains unknown is the value of $\sigmarr^2$, which plays a key role in the convergence bounds for ProxRR and ProxSO. Our next goal, thus, is to obtain an upper bound on $\sigmarr^2$, which would allow us to have a complexity for FedRR and FedSO. To find it, let us define
\[\squeeze
	\sigma_{m, \ast}^2
	\eqdef \smash{\frac{1}{N_m}\sum \limits_{j=1}^{n}} \bigl\|\nabla f_{mj}(x_\ast) - \frac{1}{N_m}\nabla F_m(x_\ast)\bigr\|^2,
\]

which is the variance of local gradients on device $m$. This quantity characterizes the convergence rate of local SGD~\citep{yuan2020federated}, so we should expect it to appear in our bounds too. The next lemma explains how to use it to upper bound $\sigmarr^2$.
\begin{Lemma}\label{lem:fed_sigma}
	The shuffling radius $\sigmarr^2$ of the reformulation~\eqref{eq:fed_reformulation} is upper bounded by
	\[\squeeze
		\sigmarr^2
		\le L_{\max} \smash{\sum \limits_{m=1}^M}\Bigl( \|\nabla F_m(x_\ast)\|^2 + \frac{n}{4}\sigma_{m, \ast}^2\Bigr).
	\]
\end{Lemma}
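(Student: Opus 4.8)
The plan is to exploit the block-separable structure of the reformulation~\eqref{eq:fed_reformulation}, in which $f_i(\xx) = \sum_{m=1}^M f_{mi}(x_m)$ acts on the disjoint blocks $x_1,\dots,x_M$, in order to split the shuffling radius into a sum of per-device contributions, each controlled by the sampling-without-replacement estimate of \Cref{lemma:sampling-without-replacement}. First I would unfold \Cref{def:bregman-div-noise} for the reformulation. Because the Bregman divergence of a block-separable function is the sum of the blockwise Bregman divergences, and because $\xx_\ast = (x_\ast^\top,\dots,x_\ast^\top)^\top$ makes the $m$-th block of $x_\ast^i - \xx_\ast$ equal to $-\gamma\sum_{j=0}^{i-1}\nabla f_{m,\pi_j}(x_\ast)$, smoothness of each $f_{mi}$ (with constant at most $L_{\max}$) gives
\[
\squeeze
\frac{1}{\gamma^2}\,\mathbb{E}_\pi\bigl[D_{f_{\pi_i}}(x_\ast^i,\xx_\ast)\bigr] \;\le\; \frac{L_{\max}}{2}\sum_{m=1}^M \mathbb{E}_\pi\Bigl\|\sum_{j=0}^{i-1}\nabla f_{m,\pi_j}(x_\ast)\Bigr\|^2 .
\]
A useful observation is that this estimate depends only on the marginal law of each device's ordering, so it is insensitive to whether one uses a single global permutation (as in the reformulation) or the independent per-device permutations actually run by \Cref{alg:fed_rr}.

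Next, for each fixed $m$ I would apply \Cref{lemma:sampling-without-replacement} to the $n$ vectors $\nabla f_{m1}(x_\ast),\dots,\nabla f_{mn}(x_\ast)$ (padding with the dummy zero gradients for $j>N_m$), whose mean is $\tfrac1n\nabla F_m(x_\ast)$. Writing the partial sum as $i\,\bar X_\pi$ and using the bias--variance identity together with \eqref{eq:b97fg07gdf_08yf8d}, the device-$m$ term equals $\tfrac{i^2}{n^2}\|\nabla F_m(x_\ast)\|^2 + \tfrac{i(n-i)}{n-1}\,\sigma_m^2$, where $\sigma_m^2$ is the variance of the padded gradient vectors. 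Maximizing over $i\in\{1,\dots,n-1\}$, the first piece is at most $\|\nabla F_m(x_\ast)\|^2$ and, since $\tfrac{i(n-i)}{n-1}\le\tfrac n2$ for $n\ge2$, the second is at most $\tfrac n2\sigma_m^2$. Inserting this into the smoothness bound and summing over $m$ produces $\sigmarr^2 \le \sum_m \bigl(\tfrac{L_{\max}}{2}\|\nabla F_m(x_\ast)\|^2 + \tfrac{L_{\max}n}{4}\sigma_m^2\bigr)$, which already has the shape of the target bound.

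The step I expect to take the most care is identifying the variance $\sigma_m^2$ that emerges naturally with the quantity $\sigma_{m,\ast}^2$ appearing in the statement: the former is centered at the padded mean $\tfrac1n\nabla F_m(x_\ast)$, whereas $\sigma_{m,\ast}^2$ is centered at the true local mean $\tfrac1{N_m}\nabla F_m(x_\ast)$ and normalized by $N_m$. Since the empirical mean of the $n$ padded vectors minimizes the sum of squared deviations, one has $n\,\sigma_m^2 = \sum_{j=1}^n \|\nabla f_{mj}(x_\ast) - \tfrac1n\nabla F_m(x_\ast)\|^2 \le \sum_{j=1}^n \|\nabla f_{mj}(x_\ast) - \tfrac1{N_m}\nabla F_m(x_\ast)\|^2 = N_m\,\sigma_{m,\ast}^2$, and with $N_m\le n$ this yields $\sigma_m^2\le\sigma_{m,\ast}^2$. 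Upgrading the two constants ($\tfrac{L_{\max}}{2}\le L_{\max}$ on the gradient-norm term, and $\sigma_m^2\le\sigma_{m,\ast}^2$ on the variance term) gives exactly the claimed inequality. The only genuinely delicate bookkeeping is making sure the dummy functions $f_{mj}\equiv 0$ for $j>N_m$ are accounted for consistently both in the limit points~\eqref{eq:x_ast_i} and in the permutation, so that the padded mean and variance are the correct objects to feed into \Cref{lemma:sampling-without-replacement}; equivalently, one may reach the same conclusion by applying \Cref{thm:shuffling-radius-bound} directly to the reformulation and then expanding $\|\nabla f(\xx_\ast)\|^2$ and $\sigmaesc^2$ blockwise.
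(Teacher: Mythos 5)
Your proof is correct and takes essentially the same route as the paper: the paper applies \Cref{thm:shuffling-radius-bound} to the reformulation and then expands $n^2\|\nabla f(\xx_\ast)\|^2$ and $n\sigmaesc^2$ blockwise---precisely the ``equivalent'' shortcut you name in your last sentence---while your primary presentation merely performs the block decomposition before, rather than after, invoking \Cref{lemma:sampling-without-replacement}, and the resulting computation is identical. The only substantive difference is how the $\tfrac{1}{n}$- versus $\tfrac{1}{N_m}$-centering mismatch is reconciled: the paper expands the square explicitly to get $n\sigma_{m,\ast}^2 + \|\nabla F_m(x_\ast)\|^2$, whereas your observation that the empirical mean minimizes the sum of squared deviations yields the slightly cleaner $\sigma_m^2 \le \sigma_{m,\ast}^2$ directly; both land on the stated bound.
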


The lemma shows that the upper bound on $\sigmarr^2$ depends on the sum of local variances $\sum_{m=1}^M \sigma_{m,\ast}^2$ as well as on the local gradient norms $\sum_{m=1}^M\|\nabla F_m(x_\ast)\|^2$. Both of these sums appear in the existing literature on convergence of Local GD/SGD~\citep{Khaled2019a,woodworth2020minibatch, yuan2020federated}.

Equipped with the variance bound, we are ready to present formal convergence results. For simplicity, we will consider heterogeneous and homogeneous cases separately and assume that $N_1=\dotsb=N_M=n$. To further illustrate generality of our results, we will present the heterogeneous assuming strong convexity $R$ and the homogeneous under strong convexity of functions $f_{mi}$.

{\bf Heterogeneous data.}
In the case when the data are heterogeneous, we provide the first local RR method. We can apply either ~\Cref{thm:f-strongly-convex-psi-convex} or \Cref{thm:psi-strongly-convex-f-convex}, but for brevity, we give only the corollary obtained from \Cref{thm:psi-strongly-convex-f-convex}.
\begin{Theorem}\label{thm:fed_hetero}
    Assume that functions $f_{mi}$ are convex and $L_i$-smooth for each $m$ and $i$. If $R$ is $\mu$-strongly convex and $\gamma\le \frac{1}{L_{\max}}$, then we have for the iterates produced by \Cref{alg:fed_rr}
    \begin{align*}
        &\squeeze \ecn{x_T - x_\ast} \leq \br{1 + 2\gamma \mu n}^{-T} \sqn{x_0 - x_\ast}  \\
        &\squeeze \qquad + \frac{ \gamma^2 L_{\max}}{M\mu} \sum \limits_{m=1}^M\Bigl(  \|\nabla F_m(x_\ast)\|^2 + \frac{N}{4M}\sigma_{m, \ast}^2\Bigr).
    \end{align*}
\end{Theorem}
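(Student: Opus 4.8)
The plan is to recognize that \Cref{alg:fed_rr} is nothing but \Cref{alg:proxrr} applied to the product-space reformulation \eqref{eq:fed_reformulation}, and then to invoke \Cref{thm:psi-strongly-convex-f-convex} directly. First I would confirm the identification: the block-separable gradient $\nabla f_i(\xx) = (\nabla f_{1i}(x_1)^\top, \dots, \nabla f_{Mi}(x_M)^\top)^\top$ makes the inner reshuffling loop of ProxRR coincide with the $M$ parallel local loops of FedRR, while the single proximal step $\prox_{\gamma n \psi}$ with $\psi = \frac{N}{n}(R + \psi_C)$ reduces, by the prox computation for $R + \psi_C$ (\Cref{lem:ext-proximal-operator}), to averaging the local models (projection onto the consensus set) followed by $\prox_{\gamma (N/M) R}$ --- exactly the last line of \Cref{alg:fed_rr}. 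Matching this prox stepsize $\gamma N/M$ is what certifies that the two algorithms are literally the same process.

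Next I would verify the hypotheses of \Cref{thm:psi-strongly-convex-f-convex} for the reformulation. By \Cref{lem:fed_reform_properties}, each $f_i$ is $L_i$-smooth, so the stepsize restriction $\gamma \le 1/L_{\max}$ carries over unchanged. The delicate point is the strong convexity of $\psi$ in the product norm: since both arguments must lie on the consensus set, for $\xx = (x,\dots,x)$ and $\yy = (y,\dots,y)$ one has $\psi(\xx) = \frac{N}{n} R(x)$ and $\sqn{\xx - \yy} = M\sqn{x-y}$, so the $\mu$-strong convexity of $R$ upgrades to $\frac{N}{nM}\mu$-strong convexity of $\psi$. In the balanced regime $N_1 = \dots = N_M = n$ assumed here we have $N = Mn$, hence $\frac{N}{nM}\mu = \mu$, and \Cref{thm:psi-strongly-convex-f-convex} applies with parameter $\mu$ and yields $\ecn{\xx_T - \xx_\ast} \le (1 + 2\gamma\mu n)^{-T}\sqn{\xx_0 - \xx_\ast} + \frac{\gamma^2 \sigmarr^2}{\mu}$.

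It remains to translate this product-space bound back to the original variable. Because every iterate produced by the proximal step lies on the consensus set, $\xx_t = (x_t,\dots,x_t)$ and therefore $\sqn{\xx_t - \xx_\ast} = M\sqn{x_t - x_\ast}$ for all $t$; dividing the displayed inequality by $M$ converts the leading term and the initial distance directly, leaving the noise term as $\frac{\gamma^2 \sigmarr^2}{M\mu}$. Substituting the upper bound $\sigmarr^2 \le L_{\max}\sum_{m=1}^M(\|\nabla F_m(x_\ast)\|^2 + \frac{n}{4}\sigma_{m,\ast}^2)$ from \Cref{lem:fed_sigma} and rewriting $\frac{n}{4} = \frac{N}{4M}$ (again using $N = Mn$) produces exactly the claimed bound.

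The main obstacle I anticipate is bookkeeping the several factors of $M$ and $N/n$ simultaneously: the $N/n$ rescaling built into $\psi$, the $M$-fold inflation of the product norm on the consensus set, and the identity $N = Mn$ all interact, and it is easy to misplace one of them and land on constants that do not match the stated theorem. Keeping the identification FedRR $=$ ProxRR exact and verifying that the strong-convexity parameter lands on $\mu$ rather than $\mu N/(nM)$ is the crux; once those are pinned down, the remaining steps are routine substitutions.
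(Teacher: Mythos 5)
Your proposal is correct and follows essentially the same route as the paper's proof: identify FedRR with ProxRR on the reformulation \eqref{eq:fed_reformulation}, check via \Cref{lem:fed_reform_properties} and the computation $\frac{N}{n}\cdot\frac{\mu}{M}=\mu$ (for $N=Mn$) that \Cref{thm:psi-strongly-convex-f-convex} applies with parameter $\mu$, divide by $M$ using $\sqn{\xx_t-\xx_\ast}=M\sqn{x_t-x_\ast}$ on the consensus set, and finish with \Cref{lem:fed_sigma}. Your careful tracking of the prox stepsize $\gamma N/M$ and of the strong-convexity modulus in the product norm is exactly the bookkeeping the paper performs.
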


\begin{figure}[t]
\centering
	\includegraphics[scale=0.20]{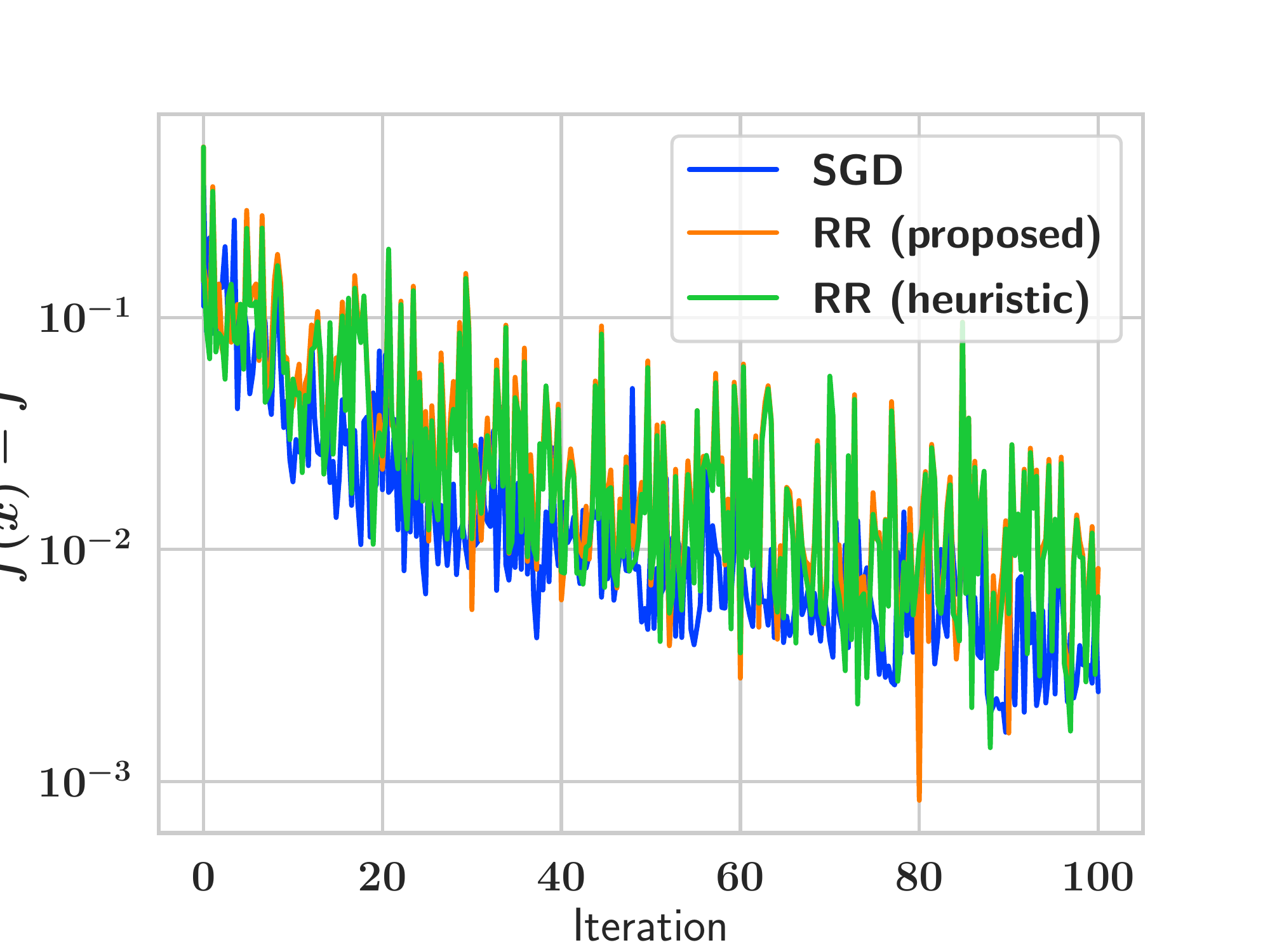}
	\includegraphics[scale=0.20]{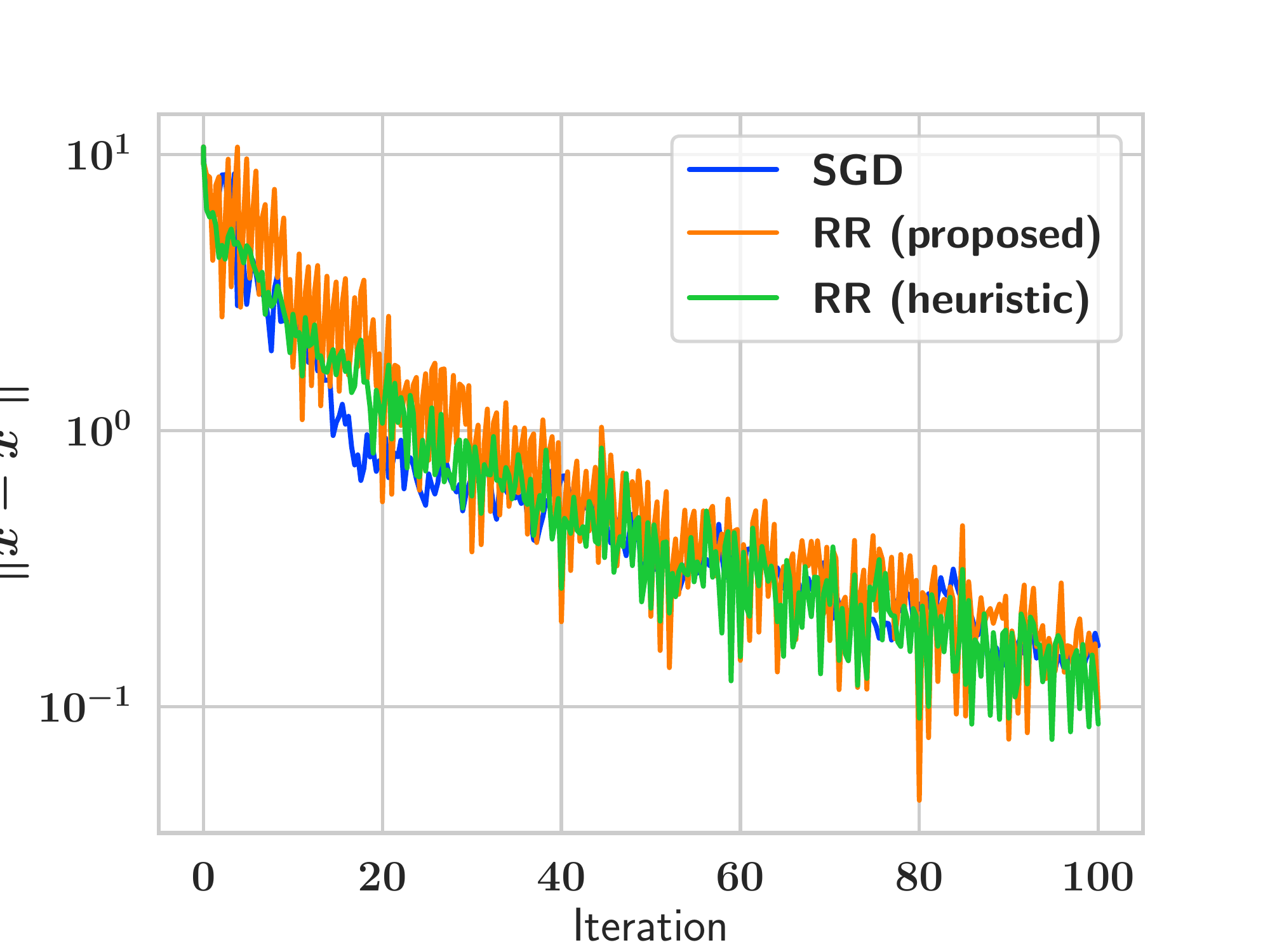}
	\includegraphics[scale=0.20]{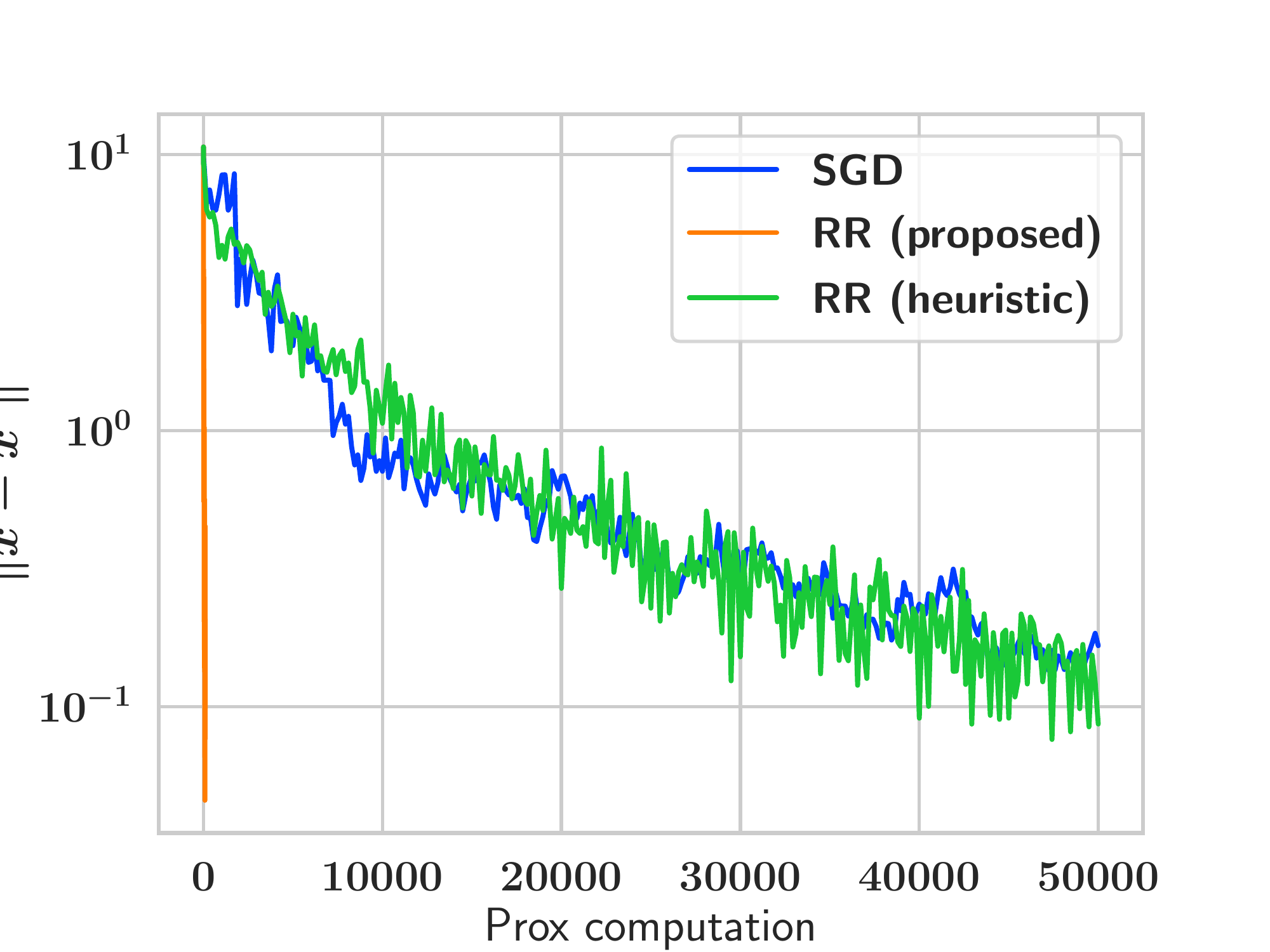}
\caption{Experimental results for problem \eqref{eq:exp_problem}.}
\end{figure}

{\bf Homogeneous data.}
For simplicity, in the homogeneous (i.e., i.i.d.) data case we provide guarantees without the proximal operator. Since then we have $F_1(x)=\dotsb = F_M(x)$, for any $m$ it holds $\nabla F_m(x_\ast)=0$, and thus $\sigma_{m, \ast}^2= \frac{1}{n}\sum_{j=1}^{n} \|\nabla f_{mj}(x_\ast)\|^2$. The full variance is then given by
\[\squeeze
	\sum \limits_{m=1}^M \sigma_{m,\ast}^2 = \frac{1}{n}\sum \limits_{m=1}^M\sum \limits_{i=1}^n \|\nabla f_{mi}(x_\ast)\|^2
	= \frac{N}{n}\sigma_\ast^2
	= M\sigma_\ast^2,
\]
where $\sigma_\ast^2\eqdef \frac{1}{N}\sum_{i=1}^n\sum_{m=1}^M\|\nabla f_{mi}(x_\ast)\|^2$ is the variance of the gradients over all data.
\begin{Theorem}\label{thm:fed_iid}
	Let $R(x)\equiv 0$ (no prox) and the data be i.i.d., that is $\nabla F_m(x_\ast)=0$ for any $m$, where $x_\ast$ is the solution of \eqref{eq:fed_learning}. If each $f_{mj}$ is $L_{\max}$-smooth and $\mu$-strongly convex, then the iterates of \Cref{alg:fed_rr} satisfy
	\[\squeeze
		\ec{\|x_T - x_\ast\|^2}
		\le (1-\gamma\mu)^{nT}\|x_0-x_\ast\|^2 + \frac{\gamma^2L_{\max}N\sigma_\ast^2}{M\mu},
	\]
	where $\sigma_\ast^2\eqdef \frac{1}{N}\sum_{i=1}^n\sum_{m=1}^M\|\nabla f_{mi}(x_\ast)\|^2$.
\end{Theorem}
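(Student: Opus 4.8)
The strategy is to recognize FedRR as nothing but ProxRR (\Cref{alg:proxrr}) run on the product-space reformulation~\eqref{eq:fed_reformulation} with $R\equiv 0$, and then to chain together the structural results already proved. Since $R\equiv 0$ we have $\psi=\frac{N}{n}\psi_C$, so the per-epoch proximal step collapses to projection onto the consensus diagonal $\{(x_1,\dotsc,x_M)\mid x_1=\dotsb=x_M\}$, i.e.\ block averaging, which is precisely the aggregation line of \Cref{alg:fed_rr}. By \Cref{lem:fed_reform_properties}, each summand $f_i(\xx)=\sum_{m=1}^M f_{mi}(x_m)$ is $L_{\max}$-smooth and $\mu$-strongly convex, so the requirement $\gamma\le\frac{1}{L_{\max}}$ is unchanged and \Cref{thm:f-strongly-convex-psi-convex} applies to~\eqref{eq:fed_reformulation} (with the device-wise independent permutations entering the analysis only through the value of $\sigmarr^2$).

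First I would apply \Cref{thm:f-strongly-convex-psi-convex} in the space $\R^{Md}$, giving
\[
\ecn{\xx_T-\xx_\ast}\le\br{1-\gamma\mu}^{nT}\sqn{\xx_0-\xx_\ast}+\frac{2\gamma^2\sigmarr^2}{\mu},
\]
where $\xx_\ast=(x_\ast,\dotsc,x_\ast)$ is the minimizer of the reformulation. The crucial step is then to translate the product-space distance back to the shared model. Because the consensus projection forces every iterate onto the diagonal, $\xx_t=(x_t,\dotsc,x_t)$ for all $t$ (and $\xx_0=(x_0,\dotsc,x_0)$), so $\sqn{\xx_t-\xx_\ast}=M\sqn{x_t-x_\ast}$. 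Dividing the display by $M$ yields
\[
\ecn{x_T-x_\ast}\le\br{1-\gamma\mu}^{nT}\sqn{x_0-x_\ast}+\frac{2\gamma^2\sigmarr^2}{M\mu}.
\]

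It remains to control $\sigmarr^2$ via \Cref{lem:fed_sigma}. Homogeneity gives $\nabla F_m(x_\ast)=0$ for every $m$, so the gradient-norm term vanishes and $\sigmarr^2\le\frac{L_{\max}n}{4}\sum_{m=1}^M\sigma_{m,\ast}^2$. Substituting the identity $\sum_{m=1}^M\sigma_{m,\ast}^2=M\sigma_\ast^2$ established in the homogeneous-data discussion, together with $N=Mn$, gives $\sigmarr^2\le\frac{L_{\max}N\sigma_\ast^2}{4}$. Plugging this into the previous display and discarding the spare factor of $2$ produces $\frac{2\gamma^2\sigmarr^2}{M\mu}\le\frac{\gamma^2 L_{\max}N\sigma_\ast^2}{2M\mu}\le\frac{\gamma^2 L_{\max}N\sigma_\ast^2}{M\mu}$, which is exactly the claimed neighborhood.

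I expect the only genuinely delicate point to be the norm rescaling: one must notice that the aggregation step keeps the iterates on the consensus diagonal, so the squared distance measured in $\R^{Md}$ is exactly $M$ times that of the single shared model, and this $1/M$ factor is what shrinks the neighborhood from the naive $\frac{2\gamma^2\sigmarr^2}{\mu}$ to the stated $\frac{\gamma^2 L_{\max}N\sigma_\ast^2}{M\mu}$; omitting it would overstate the asymptotic error by a factor of $M$. A secondary item worth verifying is that the independent per-device permutations are compatible with the single-permutation framework of \Cref{thm:f-strongly-convex-psi-convex}, but since each epoch still uses every local datum exactly once the epoch-level full-gradient approximation is unaffected and the per-device randomness surfaces only in $\sigmarr^2$, which is handled by \Cref{lem:fed_sigma}. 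Everything else is a routine chaining of \Cref{lem:fed_reform_properties}, \Cref{thm:f-strongly-convex-psi-convex}, and \Cref{lem:fed_sigma}.
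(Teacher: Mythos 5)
Your proposal is correct and follows essentially the same route as the paper: the paper's proof of \Cref{thm:fed_iid} likewise applies \Cref{lem:fed_reform_properties} and \Cref{thm:f-strongly-convex-psi-convex} to the reformulation~\eqref{eq:fed_reformulation}, and then bounds $\sigmarr^2$ exactly as in the proof of \Cref{thm:fed_hetero}, which contains the same $M$-rescaling of the product-space norm and the same specialization of \Cref{lem:fed_sigma} with $\nabla F_m(x_\ast)=0$ that you carry out. Your arithmetic (including the spare factor of $2$ absorbed into the stated neighborhood) matches the paper's bound.
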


The most important part of this result is that the last term in \Cref{thm:fed_iid} has a factor of $M$ in the denominator, meaning that the convergence bound improves with the number of devices involved.

\section[Experiments]{Experiments\footnote{Our code: \href{https://github.com/konstmish/rr_prox_fed}{https://github.com/konstmish/rr\_prox\_fed}}}

We look at the logistic regression loss with the elastic net regularization,
\begin{equation}\squeeze \label{eq:exp_problem}
	\smash{\frac{1}{N}\sum \limits_{i=1}^N f_i (x) + \lambda_1\|x\|_1+ \frac{\lambda_2}{2}\|x\|^2},
\end{equation}
where each $f_i: \R^d \to \R$ is defined as
\[ f_i (x) \eqdef -\big(b_i \log \big(h(a_i^\top x)\big) + (1-b_i)\log\big(1-h(a_i^\top x)\big)\big) \]
and where $(a_i, b_i)\in \R^d\times \{0, 1\}$, $i=1,\dotsc, N$ are the data samples, $h\colon t\to1/(1+e^{-t})$ is the sigmoid function, and $\lambda_1, \lambda_2\ge 0$ are parameters.  We set minibatch sizes to 1 for all methods and use theoretical stepsizes, without any tuning. We denote the version of RR that performs proximal operator step after each iteration as `RR (heuristic)'. We give more details in the supplementary. From the experiments, we can see that all methods behave more or less the same way. However, the algorithm that we propose needs only a small fraction of proximal operator evaluations, which gives it a huge advantage whenever the operator takes more time to compute than stochastic gradients.

\bibliography{rr_prox_icml}
\bibliographystyle{icml2021}

\clearpage
\onecolumn
\part*{Supplementary Material}



\section{Basic notions and preliminaries}
\label{sec:basic_notions}
We say that an extended real-valued function $\phi:\R^d\to \R\cup \{+\infty\}$ is proper if its domain, ${\rm dom} \; \phi \eqdef \{x : \phi(x)<+\infty\}$, is nonempty.  We say that it is convex (resp.\ closed) if its epigraph, ${\rm epi}\; \phi \eqdef \{(x,t) \in \R^d\times \R \;:\; \phi(x) \leq t\}$, is a convex (resp.\ closed) set. Equivalently, $\phi$ is convex if ${\rm dom} \; \phi $ is a convex set and $\phi(\alpha x + (1-\alpha)y) \leq \alpha \phi(x) + (1-\alpha) \phi(y)$ for all $x,y\in {\rm dom} \; \phi$ and $\alpha\in(0, 1)$. Finally, $\phi$ is $\mu$-strongly convex if $\phi (x) -\frac{\mu}{2}\norm{x}^2$ is convex, and $L$-smooth if $\frac{L}{2}\norm{x}^2-\phi(x)$ is convex.

These notions have a more useful characterization in the case of real valued and continuously differentiable functions $\phi:\R^d\to \R$.  The Bregman divergence of such $\phi$ is defined by
$ D_{\phi} (x, y) \eqdef \phi(x) - \phi(y) - \ev{\nabla \phi(y), x - y}. $
A continuously differentiable function $\phi$ is called $\mu$-strongly convex if 
\[\tfrac{\mu}{2}\norm{x - y}^2  \leq D_\phi(x,y), \qquad \forall x,y\in \R^d. \]
 It is convex if this holds with $\mu=0$. Moreover, a continuously differentiable function $\phi$ is called $L$-smooth if
\begin{equation} \label{eq:L-smooth-intro} D_\phi(x,y) \leq \tfrac{L}{2} \norm{x - y}^2, \qquad \forall x,y\in \R^d. \end{equation}
Finally, we define $[n]\eqdef \{1,2,\dots,n\}$.

\subsection{Properties of the proximal operator}

Before we proceed to the proofs of convergence, we should state some basic and well-known properties of the regularized objectives. The following lemma explains why the solution of~\eqref{eq:finite-sum-min} is a fixed point of the proximal-gradient step for \emph{any} stepsize.
\begin{Lemma}
	\label{prop:fixed-point} Let \Cref{as:smooth_fi_proper_psi} be satisfied.\footnote{We only need the part about $\psi$.} 	Then point $x_\ast$ is a minimizer of $P(x) =f(x) + \psi (x)$ if and only if for any $\gamma, b > 0$ we have
	\[ x_\ast = \prox_{\gamma n \psi} (x_\ast - \gamma b \nabla f(x_\ast)). \]
\end{Lemma}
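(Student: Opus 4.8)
The plan is to reduce both the optimality of $x_\ast$ for $P$ and the proximal equation to the \emph{same} subdifferential inclusion, after which the stated equivalence is immediate. First I would recall the variational characterization of the proximal operator. Because $\psi$ is proper, closed and convex (the only part of \Cref{as:smooth_fi_proper_psi} needed on the prox side), the function $z\mapsto \gamma n\,\psi(z)+\tfrac12\|z-v\|^2$ is proper, lower semicontinuous and strongly convex, hence admits a unique minimizer and $\prox_{\gamma n\psi}(v)$ is well defined. Fermat's rule applied to this objective gives that $p=\prox_{\gamma n\psi}(v)$ if and only if $0\in\gamma n\,\partial\psi(p)+(p-v)$, that is, $v-p\in\gamma n\,\partial\psi(p)$.

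Next I would characterize the minimizers of $P=f+\psi$. Since $f$ is finite-valued and differentiable on $\R^d$ and $\psi$ is convex, the subdifferential sum rule $\partial P=\nabla f+\partial\psi$ holds with no constraint qualification, and $P$ is convex (each $f_i$, hence $f$, is convex), so stationarity is equivalent to global optimality: $x_\ast$ minimizes $P$ if and only if $-\nabla f(x_\ast)\in\partial\psi(x_\ast)$. The forward direction in fact needs only convexity of $\psi$ and differentiability of $f$, while the converse uses convexity of $f$ to pass from the subgradient inequality for $\psi$ back to global optimality of $P$.

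Finally I would substitute $v=x_\ast-\gamma b\,\nabla f(x_\ast)$ and $p=x_\ast$ into the prox characterization: the equation $x_\ast=\prox_{\gamma n\psi}(x_\ast-\gamma b\,\nabla f(x_\ast))$ is equivalent to $-\gamma b\,\nabla f(x_\ast)\in\gamma n\,\partial\psi(x_\ast)$, and dividing by the positive scalar $\gamma n$ turns it into $-\tfrac{b}{n}\,\nabla f(x_\ast)\in\partial\psi(x_\ast)$. Once the gradient weight is matched to the prox weight --- the instance $b=n$ produced by a single epoch of \Cref{alg:proxrr}, where a full pass accumulates exactly $\gamma n\,\nabla f(x_\ast)$ at $x_\ast$ --- this is precisely the stationarity inclusion from the previous paragraph; since every step above was an ``if and only if'', both directions of the equivalence follow at once and hold for every $\gamma>0$.

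I expect no genuine difficulty here beyond careful scalar bookkeeping: the one thing to get right is that the prox weight $\gamma n$ cancels against the accumulated gradient scaling so that the inclusion collapses to $-\nabla f(x_\ast)\in\partial\psi(x_\ast)$ \emph{independently of $\gamma$}. This exact cancellation is what identifies $x_\ast$ as a fixed point of the per-epoch update $x_{t+1}=\prox_{\gamma n\psi}(x_t^n)$ and is the property subsequently exploited in the contraction arguments behind \Cref{thm:f-strongly-convex-psi-convex} and \Cref{thm:psi-strongly-convex-f-convex}.
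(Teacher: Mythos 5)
Your proposal is correct and follows essentially the same route as the paper, whose proof is simply a pointer to the standard first-order-optimality argument in \citet{Neal2014}: characterize $\prox_{\gamma n\psi}$ via Fermat's rule as $v-p\in\gamma n\,\partial\psi(p)$, characterize minimizers of $P$ as $-\nabla f(x_\ast)\in\partial\psi(x_\ast)$, and match the two inclusions. One point your careful bookkeeping surfaces and that you should state explicitly rather than gloss over: the prox equation is equivalent to $-\tfrac{b}{n}\nabla f(x_\ast)\in\partial\psi(x_\ast)$, so the equivalence as literally stated ``for any $\gamma,b>0$'' is an overstatement --- it genuinely collapses to the optimality condition only for $b=n$ (or trivially when $\nabla f(x_\ast)=0$), which is fortunately the only instance the paper ever invokes, namely $x_\ast=\prox_{\gamma n\psi}\bigl(x_\ast-\gamma\sum_{i=0}^{n-1}\nabla f_{\pi_i}(x_\ast)\bigr)$ in the proofs of \Cref{thm:f-strongly-convex-psi-convex} and \Cref{thm:psi-strongly-convex-f-convex}.
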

\begin{proof}
	This follows by writing the first-order optimality conditions for problem~\eqref{eq:finite-sum-min}, see \citep[p.32]{Neal2014} for a full proof.
\end{proof}

The lemma above only shows that proximal-gradient step does not hurt if we are at the solution. In addition, we will rely on the following a bit stronger result which postulates that the proximal operator is a contraction (resp.\ strong contraction) if the regularizer $\psi$ is convex (resp.\ strongly convex). 
\begin{Lemma}
	\label{prop:prox-contraction} Let \Cref{as:smooth_fi_proper_psi} be satisfied.\footnote{We only need the part about $\psi$.}  	If $\psi$ is $\mu$-strongly convex with $\mu\ge 0$, then for any $\gamma>0$ we have
	\begin{equation}
		\|\prox_{\gamma n\psi}(x)-\prox_{\gamma n\psi}(y)\|^2
		\le \frac{1}{1+2\gamma\mu n}\|x - y\|^2, \label{eq:prox_non_exp}
    \end{equation}
    for all $x, y \in \R^d$.
\end{Lemma}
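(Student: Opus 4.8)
The plan is to combine the first-order optimality conditions that characterize the proximal operator with the strong monotonicity of the subdifferential of $\psi$. First I would introduce the abbreviations $u \eqdef \prox_{\gamma n\psi}(x)$ and $v \eqdef \prox_{\gamma n\psi}(y)$. By definition $u$ minimizes $z \mapsto \gamma n\psi(z) + \frac12\|z-x\|^2$, so the optimality condition $0 \in \gamma n\,\partial\psi(u) + (u - x)$ yields the inclusion $x - u \in \gamma n\,\partial\psi(u)$, and symmetrically $y - v \in \gamma n\,\partial\psi(v)$. These two inclusions are the only structural facts about the prox that I will use.

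Next I would invoke strong convexity. Since $\psi$ is $\mu$-strongly convex, its subdifferential is $\mu$-strongly monotone: for any $g_u \in \partial\psi(u)$ and $g_v \in \partial\psi(v)$ one has $\langle g_u - g_v,\, u - v\rangle \ge \mu\|u-v\|^2$. Applying this to the particular subgradients $g_u = \frac{1}{\gamma n}(x - u)$ and $g_v = \frac{1}{\gamma n}(y - v)$ identified in the previous step, and multiplying through by $\gamma n > 0$, gives
\[
\langle (x - u) - (y - v),\ u - v\rangle \ \ge\ \gamma n\mu\,\|u - v\|^2 .
\]

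The bound then follows from a single expansion. Writing $w \eqdef x - y$ and $p \eqdef u - v$, the displayed inequality reads $\langle w - p,\, p\rangle \ge \gamma n\mu\,\|p\|^2$. I would then expand
\[
\|w\|^2 = \|p + (w - p)\|^2 = \|p\|^2 + 2\langle p,\, w - p\rangle + \|w - p\|^2 \ \ge\ (1 + 2\gamma\mu n)\|p\|^2,
\]
where I dropped the nonnegative term $\|w - p\|^2$ and substituted the inner-product bound. Rearranging yields $\|u - v\|^2 \le \frac{1}{1 + 2\gamma\mu n}\|x - y\|^2$, which is exactly \eqref{eq:prox_non_exp}. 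Note that keeping the discarded term shows, at $\mu = 0$, the firm nonexpansiveness $\|p\|^2 + \|w-p\|^2 \le \|w\|^2$, so the convex case is subsumed.

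The hard part will not be the algebra, which is routine, but the careful handling of the possibly nonsmooth regularizer: I must justify the optimality inclusion and the strong-monotonicity characterization of $\partial\psi$ when $\psi$ is only assumed proper, closed, and convex (as in \Cref{as:smooth_fi_proper_psi}), rather than differentiable. Both are standard consequences of convex analysis, and the $\mu = 0$ case is recovered automatically.
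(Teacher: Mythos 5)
Your proposal is correct and follows essentially the same route as the paper's proof: the optimality inclusions $x-u\in\gamma n\,\partial\psi(u)$, $y-v\in\gamma n\,\partial\psi(v)$, strong monotonicity of $\partial\psi$, and the expansion of $\|x-y\|^2 = \|(u-v)+((x-u)-(y-v))\|^2$ with the nonnegative cross term discarded. No gaps.
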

\begin{proof}
	Let $u\eqdef \prox_{\gamma n\psi}(x)$ and $v\eqdef \prox_{\gamma n\psi}(y)$. By definition, $u=\argmin_w \{\psi(w) + \frac{1}{2\gamma n}\|w-x\|^2 \}$. By first-order optimality, we have $0\in \partial \psi(u)+ \frac{1}{\gamma n}(u - x)$ or simply $x-u\in  \gamma n\partial\psi(u)$. Using a similar argument for $v$, we get $x-u-(y-v)\in \gamma n(\partial \psi(u)-\partial\psi(v))$. Thus, by strong convexity of $\psi$, we get
	\[
		\<x-u-(y-v), u-v>\ge \gamma \mu n\|u-v\|^2.
	\]
	Hence,
	\begin{align*}
		\|x-y\|^2
		&= \|u - v + (x-u - (y-v)) \|^2  \\
		&=\|u - v\|^2 + 2\<x-u - (y-v), u-v> + \|x-u-(y-v)\|^2 \\
		&\ge \|u - v\|^2 + 2\<x-u - (y-v), u-v>  \\
		&\ge (1+2\gamma \mu n)\|u-v\|^2. \newqedhere
	\end{align*}
\end{proof}

\section{Proof of \Cref{thm:shuffling-radius-bound}}
\begin{proof}
	By the $L_{i}$-smoothness of $f_i$ and the definition of $x_\ast^i$, we can replace the Bregman divergence in \eqref{eq:bregman-div-noise}  with the bound
		\begin{align}
		\ec{D_{f_{\pi_{i}}} (x_\ast^i, x_\ast)} \; \overset{\eqref{eq:L-smooth-intro}}{\le} \; \ec{\frac{L_{\pi_i}}{2}\norm{x_\ast^i - x_\ast}^2}  &\le   \frac{L_{\max}}{2}\ec{\|x_\ast^i - x_\ast\|^2} \notag \\ 
		& \overset{\eqref{eq:x_ast_i}}{=}   \frac{\gamma^2 L_{\max}}{2} \mathbb{E}\Biggl[\biggl\| \sum_{j=0}^{i-1}\nabla f_{\pi_j}(x_\ast) \biggr\|^2\Biggr]\notag  \\
		&=  \frac{\gamma^2 L_{\max} i^2}{2} \mathbb{E}\Biggl[\biggl\|\frac{1}{i} \sum_{j=0}^{i-1}\nabla f_{\pi_j}(x_\ast) \biggr\|^2\Biggr] \notag \\
		&= \frac{\gamma^2 L_{\max} i^2}{2}  \ecn{\bar{X}_\pi },\label{eq:8g9fd8gf9d}
	\end{align}
	where $\bar{X}_\pi = \frac{1}{j}\sum_{j=0}^{i-1} X_{\pi_j}$ with  $X_j \eqdef  \nabla f_{j}(x_\ast)$ for $j=1,2,\dots,n$.  Since $\bar{X} = \nabla f(x_\ast)$,  by applying Lemma~\ref{lemma:sampling-without-replacement} we get
    \begin{equation} 
        \ecn{\bar{X}_\pi } \;= \;   \norm{\bar{X}}^2 + \ecn{\bar{X}_\pi - \bar{X}}
        \; \overset{\eqref{eq:b97fg07gdf_08yf8d} + \eqref{eq:UG(*G(DG(*DGg87gf7ff}}{=} \; \norm{\nabla f(x_\ast)}^2 +\frac{n-i}{i (n-1)} \sigmaesc^2. \label{eq:08gfd_898fd*(*gdJS}
    \end{equation}
    It remains to combine  \eqref{eq:8g9fd8gf9d}  and \eqref{eq:08gfd_898fd*(*gdJS}, use the bounds  $i^2\leq n^2$ and
    $i(n-i)\le \frac{n(n-1)}{2}$, which holds for all $i\in \{1,2,  \dots, n-1\}$, and divide both sides of the resulting inequality by $\gamma^2$.
\end{proof}

\section{Main convergence proofs}
\subsection{A key lemma for shuffling-based methods}
The intermediate limit points $x_\ast^i$ are extremely important for showing tight convergence guarantees for Random Reshuffling even without proximal operator. The following lemma illustrates that by giving a simple recursion, whose derivation follows \citep[Proof of Theorem 1]{MKR2020rr}. The proof is included for completeness.

\begin{Lemma}[Theorem 1 in \citep{MKR2020rr}]
	\label{lemma:inner-loop}
	Suppose that each $f_i$ is $L_i$-smooth and $\lambda$-strongly convex (where $\lambda = 0$ means each $f_i$ is just convex). Then the inner iterates generated by Algorithm~\ref{alg:proxrr} satisfy
	\begin{align}
		\ecn{x_t^{i+1} - x_\ast^{i+1}} \leq \br{1 - \gamma \lambda} \ecn{x_t^i - x_\ast^i} - 2 \gamma \br{1 - \gamma L_{\max}} \ec{D_{f_{\pi_i}} (x_t^i, x_\ast)} + 2 \gamma^3 \sigmarr^2,		\label{eq:inner-loop-recursion}
	\end{align}
	where $x_\ast^i $ is as in \eqref{eq:x_ast_i} and $x_\ast$ is any minimizer of $P$.
\end{Lemma}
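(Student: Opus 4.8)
The plan is to track the squared distance between the inner iterate $x_t^i$ and the shadow sequence $x_\ast^i$ from \eqref{eq:x_ast_i}, and derive \eqref{eq:inner-loop-recursion} as a one-step descent inequality. First I would note that the limit points obey the same update as the iterates, but evaluated at the optimum: from \eqref{eq:x_ast_i} we have $x_\ast^{i+1} = x_\ast^i - \gamma \nabla f_{\prm{i}}(x_\ast)$, whereas the algorithm produces $x_t^{i+1} = x_t^i - \gamma \nabla f_{\prm{i}}(x_t^i)$. Subtracting and expanding the square yields
\[
  \sqn{x_t^{i+1} - x_\ast^{i+1}} = \sqn{x_t^i - x_\ast^i} - 2\gamma \langle \nabla f_{\prm{i}}(x_t^i) - \nabla f_{\prm{i}}(x_\ast),\, x_t^i - x_\ast^i\rangle + \gamma^2 \sqn{\nabla f_{\prm{i}}(x_t^i) - \nabla f_{\prm{i}}(x_\ast)}.
\]
The whole argument then reduces to controlling the cross term and the gradient-norm term on the right.

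The key step is a three-point identity for the cross term. Writing $\phi = f_{\prm{i}}$, I would use that for any points $a,b,c$ one has $\langle \nabla\phi(a) - \nabla\phi(c),\, a-b\rangle = D_\phi(a,c) + D_\phi(b,a) - D_\phi(b,c)$, which follows by expanding the three Bregman divergences and cancelling function values. Applying it with $a = x_t^i$, $b = x_\ast^i$, $c = x_\ast$ rewrites the cross term as $D_{f_{\prm{i}}}(x_t^i, x_\ast) + D_{f_{\prm{i}}}(x_\ast^i, x_t^i) - D_{f_{\prm{i}}}(x_\ast^i, x_\ast)$. For the gradient-norm term I would invoke the standard smoothness consequence $\sqn{\nabla\phi(x) - \nabla\phi(y)} \le 2L_{\prm{i}} D_\phi(x,y) \le 2L_{\max} D_{\phi}(x,y)$ with $x = x_t^i$, $y = x_\ast$. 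Substituting both, the copies of $D_{f_{\prm{i}}}(x_t^i, x_\ast)$ combine into the coefficient $-2\gamma(1-\gamma L_{\max})$, exactly matching \eqref{eq:inner-loop-recursion}.

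It remains to dispose of the two leftover Bregman terms, $-2\gamma D_{f_{\prm{i}}}(x_\ast^i, x_t^i)$ and $+2\gamma D_{f_{\prm{i}}}(x_\ast^i, x_\ast)$. The first carries a nonpositive coefficient, and $\lambda$-strong convexity gives $D_{f_{\prm{i}}}(x_\ast^i, x_t^i) \ge \tfrac{\lambda}{2}\sqn{x_\ast^i - x_t^i}$, so that $-2\gamma D_{f_{\prm{i}}}(x_\ast^i, x_t^i) \le -\gamma\lambda \sqn{x_t^i - x_\ast^i}$; absorbed into the leading term this produces the $(1-\gamma\lambda)$ factor (when $\lambda = 0$ the term is simply dropped by nonnegativity of the divergence). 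Finally I would take expectation over the permutation: by \Cref{def:bregman-div-noise} we have $\ec{D_{f_{\prm{i}}}(x_\ast^i, x_\ast)} \le \gamma^2 \sigmarr^2$ for every $i$, so the remaining term is bounded by $2\gamma^3 \sigmarr^2$, which yields \eqref{eq:inner-loop-recursion}.

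The main obstacle is steering the cross term into precisely the right combination of Bregman divergences. A naive split $x_t^i - x_\ast^i = (x_t^i - x_\ast) - (x_\ast^i - x_\ast)$ followed by ordinary (strong) convexity would produce a contraction term in $\sqn{x_t^i - x_\ast}$ rather than in $\sqn{x_t^i - x_\ast^i}$, which is the wrong reference point for iterating the recursion. The three-point identity is exactly what routes the strong-convexity gain onto the pair $(x_\ast^i, x_t^i)$ and isolates $D_{f_{\prm{i}}}(x_\ast^i, x_\ast)$ as the sole stochastic contribution that the shuffling radius must absorb; recognizing that identity, rather than resorting to a brute-force Young's inequality, is the crux of the argument.
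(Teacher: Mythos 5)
Your proposal is correct and follows essentially the same route as the paper's proof: expand the squared distance between $x_t^{i+1}$ and $x_\ast^{i+1}$, apply the three-point Bregman identity to the cross term with the same assignment of points, bound the gradient-difference term by $2L_{\max}D_{f_{\pi_i}}(x_t^i,x_\ast)$ via smoothness, use $\lambda$-strong convexity on $D_{f_{\pi_i}}(x_\ast^i,x_t^i)$, and absorb $\ec{D_{f_{\pi_i}}(x_\ast^i,x_\ast)}$ into $\gamma^2\sigmarr^2$ by the definition of the shuffling radius. Your closing remark about why the three-point identity (rather than a naive split) is the crux accurately reflects the structure of the argument.
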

\begin{proof}
	By definition of $x_{t}^{i+1}$ and $x_\ast^{i+1}$, we have
	\begin{align}
		\begin{split}
			\ecn{x_t^{i+1}-x_*^{i+1}}  &= \ecn{x_t^{i}-x_*^{i}}-2 \gamma \ec{\<\nabla f_{\pi_i}(x_t^i)-\nabla f_{\pi_i}(x_*), x_t^i - x_*^i> } \\
			& \qquad + \gamma^2 \ecn{\nabla f_{\pi_i}(x_t^i) - \nabla f_{\pi_i}(x_*)}. 		
		\end{split}\label{eq:inner-loop-proof-1}
	\end{align}
	Note that the third term in \eqref{eq:inner-loop-proof-1} can be bounded as
	\begin{equation}
		\label{eq:inner-loop-proof-2}
		\sqn{\nabla f_{\pi_i} (x_t^i) - \nabla f_{\pi_i} (x_t^i)} \leq 2 L_{\max} \cdot D_{f_{\pi_i}} (x_t^i, x_\ast).
	\end{equation}
	We may rewrite the second term in \eqref{eq:inner-loop-proof-1} using the three-point identity \citep[Lemma 3.1]{Chen1993} as
	\begin{equation}
		\label{eq:inner-loop-proof-3}
		\ev{\nabla f_{\pi_i} (x_t^i) - \nabla f_{\pi_{i}} (x_\ast), x_t^i - x_\ast^i } = D_{f_{\pi_i}} (x_\ast^i, x_t^i) + D_{f_{\pi_i}} (x_t^i, x_\ast) - D_{f_{\pi_i}} (x_\ast^i, x_\ast).
	\end{equation}
	Combining \eqref{eq:inner-loop-proof-1}, \eqref{eq:inner-loop-proof-2}, and \eqref{eq:inner-loop-proof-3} we obtain
	\begin{align}
		\begin{split}
			\ecn{x_t^{i+1} - x_\ast^{i+1}} \leq \ecn{x_t^i - x_\ast^i} &- 2 \gamma \cdot \ec{D_{f_{\pi_i}} (x_\ast^i, x_t^i)} + 2 \gamma \cdot \ec{D_{f_{\pi_i}} (x_\ast^i, x_\ast)} \\ 
			&- 2 \gamma \br{1 - \gamma L_{\max}} \ec{D_{f_{\pi_i}} (x_t^i, x_\ast)}.	
		\end{split}\label{eq:inner-loop-proof-4}
	\end{align}
	Using  $\lambda$-strong convexity of $f_{\pi_i}$, we derive
	\begin{equation}
		\label{eq:inner-loop-proof-5}
		\frac{\lambda}{2} \sqn{x_t^i - x_\ast^i} \leq D_{f_{\pi_i}} (x_\ast^i, x_t^i).
	\end{equation}
	Furthermore, by the definition of shuffling radius (Definition~\ref{def:bregman-div-noise}), we have
	\begin{equation}
		\label{eq:inner-loop-proof-6}
		\ec{D_{f_{\pi_i}} (x_\ast^i, x_\ast)} \leq \max_{i=1, \ldots, n-1}  \ec{D_{f_{\pi_i}} (x_\ast^i, x_\ast) }  = \gamma^2 \sigmarr^2.
	\end{equation}
	Using~\eqref{eq:inner-loop-proof-5} and~\eqref{eq:inner-loop-proof-6} in \eqref{eq:inner-loop-proof-4} yields \eqref{eq:inner-loop-recursion}.
\end{proof}

\subsection{Proof of \Cref{thm:f-strongly-convex-psi-convex}}
\begin{proof}
	Starting with Lemma~\ref{lemma:inner-loop} with $\lambda = \mu$, we have
	\begin{align*}
		\begin{split}
			\ecn{x_t^{i+1} - x_\ast^{i+1}} \leq \br{1 - \gamma \mu} \ecn{x_t^i - x_\ast^i} - 2 \gamma \br{ 1 - \gamma L_{\max}} \ec{D_{f_{\pi_i}} (x_t^i, x_\ast) } + 2 \gamma^3 \sigmarr^2.
		\end{split}
	\end{align*}
	Since $D_{f_{\pi}} (x_t^i, x_\ast)$ is a Bregman divergence of a convex function, it is nonnegative. Combining this with the fact that the stepsize satisfies $\gamma \leq \frac{1}{L_{\max}}$, we have
	\[ \ecn{x_t^{i+1} - x_\ast^{i+1}} \leq \br{1 - \gamma \mu} \ecn{x_t^i - x_\ast^i} + 2 \gamma^3 \sigmarr^2. \]
	Unrolling this recursion for $n$ steps, we get
	\begin{align}
		\ecn{x_t^{n} - x_\ast^n} &\leq \br{1 - \gamma \mu}^{n} \ecn{x_t^0 - x_\ast^0} + 2 \gamma^3 \sigmarr^2 \br{ \sum_{j=0}^{n-1} \br{1 - \gamma \mu}^{j} } \notag \\
		&= \br{1 - \gamma \mu}^{n} \ecn{x_t - x_\ast} + 2 \gamma^3 \sigmarr^2 \br{ \sum_{j=0}^{n-1} \br{1 - \gamma \mu}^j },		\label{eq:thm-f-sc-proof-1}
	\end{align}
	where we used the fact that $x_t^0 - x_\ast^0 = x_t - x_\ast$. Since $x_\ast$ minimizes $P$, we have by Lemma~\ref{prop:fixed-point} that
	\[ x_\ast = \prox_{\gamma n \psi} \br{ x_\ast - \gamma \sum_{i=0}^{n-1} \nabla f_{\pi_i} (x_\ast) } = \prox_{\gamma n \psi} \br{x_\ast^n}. \] 
	Moreover, by Lemma~\ref{prop:prox-contraction} we obtain that 
	\[ \sqn{x_{t+1} - x_\ast} = \sqn{\prox_{\gamma n \psi} (x_t^n) - \prox_{\gamma n \psi} (x_\ast^n) } \leq \sqn{x_t^n - x_\ast^n}. \]
	Using this in~\eqref{eq:thm-f-sc-proof-1} yields
	\[
		\ecn{x_{t+1} - x_\ast} \leq \br{1 - \gamma \mu}^n \ecn{x_t - x_\ast} + 2 \gamma^3 \sigmarr^2 \br{ \sum_{j=0}^{n-1} \br{1 - \gamma \mu}^j }.
	\]
	We now unroll this recursion again for $T$ steps
	\begin{align}
		\label{eq:thm-f-sc-proof-2}
		\begin{split}
			\ecn{x_{T} - x_\ast} \leq \br{1 - \gamma \mu}^{n T} \ecn{x_0 - x_\ast} + 2 \gamma^3 \sigmarr^2 \br{ \sum_{j=0}^{n-1} \br{1 - \gamma \mu}^j } \br{ \sum_{i=0}^{T-1} \br{1 - \gamma \mu}^{n i} }
		\end{split}.
	\end{align}
	Following \citet{MKR2020rr}, we rewrite and bound the product in the last term as
	\begin{align*}
		\br{ \sum_{j=0}^{n-1} \br{1 - \gamma \mu}^j } \br{ \sum_{i=0}^{T-1} \br{1 - \gamma \mu}^{n i} } &= \sum_{j=0}^{n-1} \sum_{i=0}^{T-1} \br{1 - \gamma \mu}^{ni + j} \\
		&= \sum_{k=0}^{nT-1} \br{1 - \gamma \mu}^k \\
		&\leq \sum_{k=0}^{\infty} \br{1 - \gamma \mu}^{k} \quad = \frac{1}{\gamma \mu}.
	\end{align*}
	It remains to plug this bound into~\eqref{eq:thm-f-sc-proof-2}.
\end{proof}

\subsection{Proof of \Cref{thm:psi-strongly-convex-f-convex}}
\begin{proof}
	Starting with Lemma~\ref{lemma:inner-loop} with $\lambda = 0$, we have
	\[ \ecn{x_t^{i+1} - x_\ast^{i+1}} \leq \ecn{x_t^i - x_\ast^i} - 2 \gamma \br{1 - \gamma L_{\max} } \ec{D_{f_{\pi_i}} (x_t^i, x_\ast) } + 2 \gamma^3 \sigmarr^2. \]
	Since $\gamma \leq \frac{1}{L_{\max}}$ and $D_{f_{\pi}} (x_t^i, x_\ast)$ is nonnegative we may simplify this to
	\[ \ecn{x_t^{i+1} - x_\ast^{i+1}} \leq \ecn{x_t^i - x_\ast^i} + 2 \gamma^3 \sigmarr^2. \]
	Unrolling this recursion over an epoch we have
	\begin{equation}
		\label{eq:thm-psi-sc-proof-1}
		\ecn{x_t^{n} - x_\ast^n} \leq \ecn{x_t^0 - x_\ast^0} + 2 \gamma^3 \sigmarr^2 n = \ecn{x_t - x_\ast} + 2 \gamma^3 \sigmarr^2 n.
	\end{equation}
	Since $x_\ast$ minimizes $P$, we have by Lemma~\ref{prop:fixed-point} that
	\[ x_\ast = \prox_{\gamma n \psi} \br{ x_\ast - \gamma \sum_{i=0}^{n-1} \nabla f_{\pi_i} (x_\ast) } = \prox_{\gamma n \psi} \br{x_\ast^n}. \] 
	Hence, $x_{t+1} - x_\ast = \prox_{\gamma n \psi} (x_t^n) - \prox_{\gamma n \psi} (x_\ast^n)$. We may now use Lemma~\ref{prop:prox-contraction} to get
	\[ \br{1 + 2 \gamma \mu n} \ecn{x_{t+1} - x_\ast} \leq \ecn{x_t^n - x_\ast^n}. \]
	Combining this with \eqref{eq:thm-psi-sc-proof-1}, we obtain
	\[
		\ecn{x_{t+1} - x_\ast} \leq \frac{1}{1 +2 \gamma \mu n} \ecn{x_t - x_\ast} + \frac{2 \gamma^3 \sigmarr^2 n}{1 + 2\gamma \mu n}.
	\]
	We may unroll this recursion again, this time for $T$ steps, and then use that $\sum_{j=1}^{T-1} \br{1 +2 \gamma \mu n}^{-j} \leq \sum_{j=1}^{\infty} \br{1 + 2\gamma \mu n}^{-j} = 1/(2\gamma \mu n)$:
	\begin{align*}
		\ecn{x_{T} - x_\ast} &\leq \br{1 + 2\gamma \mu n}^{-T} \ecn{x_0 - x_\ast} + \frac{2 \gamma^3 \sigmarr^2 n}{1 +2 \gamma \mu n} \Biggl(\sum_{j=0}^{T-1} \br{1 +2 \gamma \mu n}^{-j}\Biggr) \\
		&= \br{1 +2 \gamma \mu n}^{-T} \ecn{x_0 - x_\ast} + 2 \gamma^3 \sigmarr^2 n \Biggl(\sum_{j=1}^{T} \br{1 +2 \gamma \mu n}^{-j} \Biggr) \\
		&\leq \br{1 + 2\gamma \mu n}^{-T} \ecn{x_0 - x_\ast} + 2 \gamma^3 \sigmarr^2 n \frac{1}{2\gamma \mu n} \\
		&= \br{1 +2 \gamma \mu n}^{-T} \ecn{x_0 - x_\ast} + \frac{ \gamma^2 \sigmarr^2}{\mu}.
		\newqedhere
	\end{align*}
\end{proof}

\section{Convergence of SGD (Proof of \Cref{thm:conv-prox-sgd})}
\begin{proof}
	We will prove the case when $\psi$ is $\mu$-strongly convex. The other result follows as a straightforward special case of \citep[Theorem 4.1]{Gorbunov2020}. We start by analyzing one step of SGD with stepsize $\gamma_k = \gamma$ and using \Cref{prop:fixed-point}
	\begin{align}
		\sqn{x_{k+1} - x_\ast} &= \sqn{ \prox_{\gamma \psi} (x_k - \gamma \nabla f_{\xi} (x_k)) - \prox_{\gamma \psi} (x_\ast - \gamma \nabla f (x_\ast))} \nonumber \\
		&\leq \frac{1}{1 + 2 \gamma \mu} \sqn{x_k - \gamma \nabla f_{\xi} (x_k) - (x_\ast - \gamma \nabla f (x_\ast))}. \label{eq:sgd-proof-1}
	\end{align}
	We may write the squared norm term in \eqref{eq:sgd-proof-1} as
	\begin{align}
		\label{eq:sgd-proof-2}
		\begin{split}
			\sqn{x_k - \gamma \nabla f_{\xi} (x_k) - (x_\ast - \gamma \nabla f (x_\ast))} &= \sqn{x_k - x_\ast} - 2 \gamma \ev{x_k - x_\ast, \nabla f_{\xi} (x_k) - \nabla f (x_\ast)} \\
			&\qquad + \gamma^2 \sqn{\nabla f_{\xi} (x_k) - \nabla f (x_\ast)}.
		\end{split}
	\end{align}
	We denote by $\ec[k]{\cdot}$ expectation conditional on $x_k$. Note that the gradient estimate is conditionally unbiased, i.e., that $\ec[k]{\nabla f_\xi (x_k)} = \frac{1}{n} \sum_{i=1}^{n} \nabla f_i (x_k) = \nabla f(x_k)$. Hence, taking conditional expectation in \eqref{eq:sgd-proof-2} and using unbiasedness we have
	\begin{align}
		\label{eq:sgd-proof-3}
		\begin{split}
			\ecn[k]{x_k - \gamma \nabla f_{\xi} (x_k) - (x_\ast - \gamma \nabla f (x_\ast))} &= \sqn{x_k - x_\ast} - 2 \gamma \ev{x_k - x_\ast, \nabla f (x_k) - \nabla f (x_\ast)} \\
			&\qquad + \gamma^2 \ecn[k]{\nabla f_{\xi} (x_k) - \nabla f (x_\ast)}.
		\end{split}
	\end{align}
	By the convexity of $f$ we have
	\[
		\ev{x_k - x_\ast, \nabla f(x_k) - \nabla f(x_\ast)} \geq D_{f} (x_k, x_\ast).
	\]
	Furthermore, we may estimate the third term in \eqref{eq:sgd-proof-3} by first using the fact that $\sqn{x + y} \leq 2 \sqn{x} + 2 \sqn {y}$ for any two vectors $x, y \in \R^d$
	\begin{align*}
		\ec[k]{\sqn{\nabla f_{\xi} (x_k) - \nabla f(x_\ast)}} &\leq 2 \ecn[k]{\nabla f_{\xi} (x_k) - \nabla f_{\xi} (x_\ast)} + 2 \ecn[k]{\nabla f_{\xi} (x_\ast) - \nabla f(x_\ast)} \nonumber \\
		&= 2 \ecn[k]{\nabla f_{\xi} (x_k) - \nabla f_{\xi} (x_\ast)} + 2 \sigmaesc^2.
	\end{align*}
	We now use that by the $L_{\max}$-smoothness of $f_i$ we have that 
	\[ \sqn{\nabla f_i (x_k) - \nabla f_{i} (x_\ast)} \leq 2 L_{\max} \cdot D_{f_i} (x_k, x_\ast). \] 
	Hence
	\begin{align}
		\ecn[k]{\nabla f_{\xi} (x_k) - \nabla f_{\xi} (x_\ast)} &= \frac{1}{n} \sum_{i=1}^{n} \sqn{\nabla f_i (x_k) - \nabla f_{i} (x_\ast)} \nonumber \\
		&\leq \frac{2 L_{\max}}{n} \sum_{i=1}^{n} \left [ f_i (x_k) - f_i (x_\ast) - \ev{\nabla f_i (x_\ast), x_k - x_\ast} \right ] \nonumber \\
		&= 2 L_{\max} \left [ f(x_k) - f(x_\ast) - \ev{\nabla f(x_\ast), x_k - x_\ast} \right ] \nonumber \\
		&= 2 L_{\max} D_{f} (x_k, x_\ast).
		\label{eq:sgd-proof-6}
	\end{align}
	Combining equations \eqref{eq:sgd-proof-3}--\eqref{eq:sgd-proof-6} we obtain
	\begin{align*}
		\begin{split}
			\ecn[k]{x_k - \gamma \nabla f_\xi (x_k) - (x_\ast - \gamma \nabla f(x_\ast))} &\leq \sqn{x_k - x_\ast} - 2 \gamma \br{1 - 2 \gamma L_{\max}} D_{f} (x_k, x_\ast) \\
			&\qquad + 2 \gamma^2 \sigmaesc^2.
		\end{split}
	\end{align*}
	Since $\gamma \leq \frac{1}{2 L_{\max}}$ by assumption we have that $1 - 2 \gamma L_{\max} \geq 0$. Since $D_{f} (x_k, x_\ast) \geq 0$ by the convexity of $f$ we arrive at
	\[
		\ecn[k]{x_k - \gamma \nabla f_\xi (x_k) - (x_\ast - \gamma \nabla f(x_\ast))} \leq \sqn{x_k - x_\ast} + 2 \gamma^2 \sigmaesc^2.
	\]
	Taking unconditional expectation and combining \eqref{eq:dec-step-1} with the last equation we have
	\begin{align*}
		\ecn{x_{k+1} - x_\ast} &\leq \frac{1}{1 + 2 \gamma \mu} \br{ \ecn{x_k - x_\ast} + 2 \gamma^2 \sigmaesc^2 } \\
		&= \frac{1}{1 + 2 \gamma \mu} \ecn{x_k - x_\ast} + \frac{2 \gamma^2 \sigmaesc^2}{1 + 2 \gamma \mu} \\
		&\leq \frac{1}{1 + 2 \gamma \mu} \ecn{x_k - x_\ast} + 2 \gamma^2 \sigmaesc^2.
	\end{align*}
	To simplify this further, we use that for any $x \leq \frac{1}{2}$ we have that $\frac{1}{1 + 2x} \leq 1-x$ and that $\gamma \mu \leq \frac{\mu}{2 L_{\max}} \leq \frac{1}{2}$, hence
	\begin{align*}
		\ecn{x_{k+1} - x_\ast} \leq \br{1 - \gamma \mu} \ecn{x_k - x_\ast} + 2 \gamma^2 \sigmaesc^2.
	\end{align*}
	Recursing the above inequality for $K$ steps yields
	\begin{align*}
		\ecn{x_K - x_\ast} &\leq \br{1 - \gamma \mu}^K \sqn{x_0 - x_\ast} + 2 \gamma^2 \sigmaesc^2 \br{ \sum_{k=0}^{K-1} \br{1 - \gamma \mu}^k } \\
		&\leq \br{1 - \gamma \mu}^K \sqn{x_0 - x_\ast} + 2 \gamma^2 \sigmaesc^2 \br{ \sum_{k=0}^{\infty} \br{1 - \gamma \mu}^k } \\
		&= \br{1 - \gamma \mu}^K \sqn{x_0 - x_\ast} + \frac{2 \gamma \sigmaesc^2}{\mu}.
		\newqedhere
	\end{align*}
\end{proof}

\section{Proofs for decreasing stepsize}

We first state and prove the following algorithm-independent lemma. This lemma plays a key role in the proof of \Cref{thm:psi-strongly-cvx-dec-stepsizes} and is heavily inspired by the stepsize schemes of \citet{Stich2019b} and \citet{ES-SGD-nonconvex} and their proofs. 

\begin{Lemma}
	\label{lemma:decreasing-stepsizes-recursion-solution}
	Suppose that there exist constants $a, b, c \geq 0$ such that for all $\gamma_t \leq \frac{1}{b}$ we have
	\begin{equation} 
		\label{eq:dec-stepsizes-recursion-init}
		\br{1 + \gamma_t a n} r_{t+1} \leq r_{t} + \gamma_t^3 c.
	\end{equation}
	Fix $T > 0$. Let $t_0 = \ceil{\frac{T}{2}}$. Then choosing stepsizes $\gamma_t > 0$ by
	\[
		\gamma_{t} = 
		\begin{cases}
			\frac{1}{b}, & \text { if } t \leq t_0 \text { or } T \leq \frac{b}{a n}, \\
			\frac{7}{a n \br{s + t - t_0}} & \text{ if } t > t_0 \text { and } T > \frac{b}{a n},
		\end{cases}	
	\]
	where $s = \frac{7b}{2 an}$. Then
	\[ r_{T} \leq \exp\br{-\frac{n T}{2 \br{b/a + n}}} r_0 + \frac{1421 c}{a^3 n^3 T^2}. \]
\end{Lemma}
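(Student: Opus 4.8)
The plan is to run the standard weighted-telescoping argument on the one-step recursion \eqref{eq:dec-stepsizes-recursion-init}, in the spirit of \citet{Stich2019b} and \citet{ES-SGD-nonconvex}, and then control the resulting weighted noise sum by splitting the horizon into the constant-stepsize ``burn-in'' phase $t \le t_0$ and the decreasing phase $t > t_0$. Before anything else I would dispose of two preliminaries. First, verify that every stepsize satisfies $\gamma_t \le 1/b$, so that \eqref{eq:dec-stepsizes-recursion-init} is applicable at each step: this is immediate on the constant phase and follows on the decreasing phase from the definition $s = 7b/(2an)$ together with the case condition $T > b/(an)$. Second, settle the easy case $T \le b/(an)$ directly: all stepsizes equal $1/b$, so unrolling \eqref{eq:dec-stepsizes-recursion-init} gives a geometric series whose noise part is at most $c/(ab^2 n)$, and since $T \le b/(an)$ forces $1/(a^3 n^3 T^2) \ge 1/(a n b^2)$, this is absorbed into the claimed bound.

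For the main case, introduce the weights $W_t := \prod_{i=0}^{t-1}(1 + \gamma_i a n)$ with $W_0 = 1$. Multiplying \eqref{eq:dec-stepsizes-recursion-init} by $W_{t+1} = W_t(1+\gamma_t a n)$ and using $W_{t+1}/(1+\gamma_t a n) = W_t$ turns the recursion into the telescoping form $W_{t+1} r_{t+1} \le W_t r_t + c\, W_t \gamma_t^3$, so that summing from $0$ to $T-1$ yields
\[
r_T \le \frac{r_0}{W_T} + \frac{c}{W_T}\sum_{t=0}^{T-1} W_t \gamma_t^3 .
\]
The exponential term is then routine: at least $t_0 = \ceil{T/2} \ge T/2$ of the factors in $W_T$ equal $1 + a n/b$, so $1/W_T \le (1+a n/b)^{-T/2}$, and the elementary estimate $\log(1+x) \ge x/(1+x)$ with $x = a n/b$ converts this into $\exp(-nT/(2(b/a+n)))$, matching the first term of the claim.

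The crux is the weighted noise sum $\tfrac{c}{W_T}\sum_t W_t \gamma_t^3 = c\sum_t \gamma_t^3 \prod_{i=t}^{T-1}(1+\gamma_i a n)^{-1}$. Writing $\gamma_i a n = 7/(s+i-t_0)$ on the decreasing phase, each discount factor becomes $(s+i-t_0)/(s+i-t_0+7)$, and the product over $i$ telescopes to a ratio of the form $\prod_{k=0}^{6}(s+t-t_0+k)\big/\prod_{k=1}^{7}(s+T-t_0+k)$, which behaves like $\bigl((s+t-t_0)/(s+T-t_0)\bigr)^{7}$. I would handle the burn-in terms ($t \le t_0$, where $\gamma_t = 1/b$) first: each carries the entire decreasing-phase discount as a common factor times a geometrically decaying burn-in discount, so their total contributes at lower order in $T$ (decaying like the seventh power of the horizon) and is dominated by the claimed bound. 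For the decreasing phase I would substitute $u = s + t - t_0$, bound $\gamma_t^3 = 343/(a^3 n^3 u^3)$ times the telescoped discount, and compare the resulting sum to the integral $\int u^{4}\,du$, which produces the $1/(a^3 n^3 (s+T-t_0)^2) \sim 1/(a^3 n^3 T^2)$ scaling.

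I expect the main obstacle to be exactly this last step: tracking the absolute constant through the telescoped discount factor and the integral (or summation) comparison, keeping the shift by $s$ and the $+k$ offsets under control and accounting for the $\ceil{\cdot}$ in $t_0$, so as to land on the explicit constant $1421$ rather than a crude overestimate (a naive factorwise bound already loses powers of $7$). The exponential decay, the preliminary stepsize check, and the burn-in terms are all routine; it is the careful nonasymptotic bookkeeping of the polynomially-weighted sum on the decreasing phase that carries the difficulty.
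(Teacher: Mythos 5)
Your overall architecture---verify the stepsize condition, dispose of the small-$T$ case by unrolling a geometric recursion, then split the horizon at $t_0$ and combine a constant-stepsize burn-in bound with a weighted telescoping argument on the decreasing phase---is the same as the paper's. The genuine difference is in how the decreasing phase is telescoped. The paper does not keep the exact product weights $W_t=\prod_i(1+\gamma_i a n)$: it multiplies the recursion by the cubic weight $w_t=(s+t-t_0)^3$ and uses the elementary inequality $(s+t-t_0)^3\bigl(1+\tfrac{7}{s+t-t_0}\bigr)\ge (s+t-t_0+1)^3$ (valid because $7u^2\ge 3u^2+3u+1$ for $u\ge 1$), so that $w_{t+1}r_{t+1}-w_t r_t\le 7^3c/(a^3n^3)$ telescopes in one line and the constant $1421=49+4\cdot 7^3$ falls out immediately. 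Your route instead exploits the exact telescoping $\prod_{i=t}^{T-1}\tfrac{s+i-t_0}{s+i-t_0+7}=\prod_{k=0}^{6}\tfrac{s+t-t_0+k}{s+T-t_0+k}$, which gives a stronger (seventh-power rather than third-power) discount of early noise and would, if pushed through, yield a smaller constant; the price is that $\sum_t u_t^{-3}\prod_{k}(u_t+k)$ is a degree-four polynomial sum with offsets, and the burn-in contribution in the regime $b/(an)<T\le 2s$ (where $s/T$ need not be small, and where the paper's proof quietly switches to treating all stepsizes as constant) needs separate care. Both are viable; the paper's low-degree polynomial weight is precisely what makes the explicit constant painless.

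One sub-claim in your preliminaries is wrong as stated: on the decreasing phase, $\gamma_t\le 1/b$ is equivalent to $s+t-t_0\ge 7b/(an)=2s$, i.e.\ $t-t_0\ge s$, and this does \emph{not} follow from $T>b/(an)$---it fails for the first $\lfloor s\rfloor$ epochs after $t_0$ whenever $s>1$. (The paper's own proof applies \eqref{eq:dec-stepsizes-recursion-init} on the decreasing phase without checking this either, so the issue lies with the lemma's hypotheses rather than with your strategy specifically, but your stated justification does not go through.) Apart from this, and from the deferred constant-tracking that you correctly identify as the only real labor, the proposal is sound.
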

\begin{proof}
	If $T \leq \frac{7b}{an}$, then we have $\gamma_t = \gamma = \frac{1}{b}$ for all $t$. Hence recursing we have,
	\begin{align*}
		r_{T} &\leq \br{1 + \gamma a n}^{-T} r_0 + \frac{\gamma^3 c}{\gamma a n} = \br{1 + \gamma a n}^{-T} r_0 + \frac{\gamma^2 c}{a n}.
	\end{align*}
	Note that $\frac{1}{1+x} \leq \exp(-\frac{x}{1+x})$ for all $x$, hence
	\begin{align*}
		r_{T} &\leq \exp\br{ - \frac{\gamma a n T}{1 + \gamma a n} } r_{0} + \frac{\gamma^2 c}{an}
	\end{align*}
	Substituting for $\gamma$ yields
	\begin{align*}
		r_{T} &\leq \exp\br{-\frac{nT}{b/a + n}} r_{0} + \frac{c}{b^2 a n}.
	\end{align*}
	Note that by assumption we have $\frac{1}{b} \leq \frac{7}{T a n}$, hence
	\begin{align}
		\label{eq:dec-step-1}
		r_{T} &\leq \exp\br{-\frac{nT}{b/a + n}} r_{0} + \frac{49 c}{T^2 a^3 n^3}.
	\end{align}
	If $T > \frac{7 b}{an}$, then we have for the first phase when $t \leq t_0$ with stepsize $\gamma_t = \frac{1}{b}$ that
	\begin{align}
		\label{eq:dec-step-2}
		r_{t_0} &\leq \exp\br{-\frac{nt_0}{b/a + n}} r_0 + \frac{c}{b^2 a n} \leq \exp\br{-\frac{nT}{2(b/a + n)}} r_{0} + \frac{c}{b^2 a n}.
	\end{align}
	Then for $t > t_0$ we have
	\[
		\br{1 + \gamma_t a n} r_{t+1} \leq r_{t} + \gamma_t^3 c = r_{t} + \frac{7^3 c}{a^3 n^3 \br{s + t - t_0}^3}.
	\]
	Multiplying both sides by $(s+t - t_0)^3$ yields 
	\begin{equation}
		\label{eq:dec-step-3}
		\br{s + t - t_0}^3 \br{1 + \gamma_t a n} r_{t+1} \leq \br{s + t - t_0}^3 r_{t} + \frac{7^3 c}{a^3 n^3}.
	\end{equation}
	Note that because $t$ and $t_0$ are integers and $t > t_0$, we have that $t - t_0 \geq 1$ and therefore $s + t - t_0 \geq 1$. We may use this to lower bound the multiplicative factor in the left hand side of \eqref{eq:dec-step-3} as
	\begin{align}
		\br{s + t - t_0}^3 \br{1 + \gamma_t a n} &= \br{s + t - t_0}^3 \br{1 + \frac{7}{s + t - t_0}} \nonumber \\
		&= \br{s + t - t_0}^3 + 7 \br{s + t - t_0}^2 \nonumber \\
		&= \br{s + t - t_0}^3 + 3 \br{s + t - t_0}^2 + 3 \br{s + t - t_0}^2 + \br{s + t - t_0}^2 \nonumber \\
		&\geq \br{s + t - t_0}^3 + 3 \br{s + t - t_0}^2 + 3 \br{s + t - t_0} + 1 \nonumber \\
		\label{eq:dec-step-4}
		&= \br{s + t + 1 - t_0}^3.
	\end{align}
	Using \eqref{eq:dec-step-4} in \eqref{eq:dec-step-3} we obtain 
	\[
		\br{s + t + 1 - t_0}^3 r_{t+1} \leq \br{s + t - t_0}^3 r_t + \frac{7^3 c}{a^3 n^3}.
	\]
	Let $w_{t} = \br{s + t - t_0}^3$. Then we can rewrite the last inequality as
	\[
		w_{t+1} r_{t+1} - w_t r_t \leq \frac{7^3 c}{a^3 n^3}.
	\]
	Summing up and telescoping from $t=t_0$ to $T$ yields
	\[
		w_{T} r_{T} \leq w_{t_0} r_{t_0} + \frac{7^3 c}{a^3 n^3} \br{T - t_0}.
	\]
	Note that $w_{t_0} = s^3$ and $w_{T} = \br{s + T - t_0}^3$. Hence,
	\begin{align*}
		r_{T} &\leq \frac{s^3}{\br{s + T - t_0}^3} r_{t_0} + \frac{7^3 c}{a^3 n^3 \br{s + T - t_0}^2} \frac{T - t_0}{s + T - t_0} \\
		&\leq \frac{s^3}{\br{s + T - t_0}^3} r_{t_0} + \frac{7^3 c}{a^3 n^3 \br{s + T - t_0}^2}.
	\end{align*}
	Since we have $s + T - t_0 \geq T - t_0 \geq T/2$, it holds
	\begin{equation}
		\label{eq:dec-step-5}
		r_{T} \leq \frac{8 s^3}{T^3} r_{t_0} + \frac{4 \cdot 7^3 c}{a^3 n^3 T^2}.
	\end{equation}
	The bound in \eqref{eq:dec-step-2} can be rewritten as
	\[
		\frac{s^3}{T^3} r_{t_0} \leq \frac{s^3}{T^3} \exp\br{-\frac{n T}{2 \br{b/a + n}}} r_{0} + \frac{s^3 c}{b^2 a n T^3}.
	\]
	We now rewrite the last inequality, use that $T > 2s$ and further use the fact that $s = \frac{7b}{2an}$:
	\begin{align}
		\frac{s^3}{T^3} r_{t_0} &\leq \underbrace{\br{\frac{s}{T}}^3}_{\leq 1/8} \exp\br{-\frac{n T}{2 \br{b/a + n}}} r_{0} + \frac{s^2 c}{b^2 a n T^2} \underbrace{\br{\frac{s}{T}}}_{\leq 1/2} \nonumber \\
		&\leq \frac{1}{8} \exp\br{-\frac{n T}{2 \br{b/a + n}}} r_{0} + \frac{s^2 c}{2 b^2 a n T^2} \nonumber \\
		\label{eq:dec-step-6}
		&= \frac{1}{8} \exp\br{-\frac{n T}{2 \br{b/a + n}}} r_{0} + \frac{7^2 c}{8 a^3 n^3 T^2}.
	\end{align}
	Plugging in the estimate of \eqref{eq:dec-step-6} into \eqref{eq:dec-step-5} we obtain
	\begin{align}
		r_{T} &\leq \exp\br{-\frac{n T}{2 \br{b/a + n}}} r_0 + \frac{7^2 c}{a^3 n^3 T^2} + \frac{4 \cdot 7^3 c}{a^3 n^3 T^2} \nonumber \\
		\label{eq:dec-step-7}
		&= \exp\br{-\frac{n T}{2 \br{b/a + n}}} r_0 + \frac{1421 c}{a^3 n^3 T^2}.
	\end{align}
	Taking the maximum of \eqref{eq:dec-step-1} and \eqref{eq:dec-step-7} we see that for any $T > 0$ we have
	\[
		r_{T} \leq \exp\br{-\frac{n T}{2 \br{b/a + n}}} r_0 + \frac{1421 c}{a^3 n^3 T^2}.
		\newqedhere
	\]
\end{proof}

\subsection{Proof of \Cref{thm:psi-strongly-cvx-dec-stepsizes}}
\begin{proof}
	Start with Lemma~\ref{lemma:inner-loop} with $\lambda = 0$, $L=L_{\max}$, and $\gamma=\gamma_t$,
	\[ \ecn{x_t^{i+1} - x_\ast^{i+1}} \leq \ecn{x_t^i - x_\ast^i} - 2 \gamma \br{1 - \gamma L_{\max}} \ec{D_{f_{\pi_i}}(x_t^i, x_\ast) } + 2 \gamma_t^3 \sigmarr^2. \]
	Since $\gamma \leq \frac{1}{L_{\max}}$ and $D_{f_{\pi}} (x_t^i, x_\ast)$ is nonnegative we may simplify this to
	\[ \ecn{x_t^{i+1} - x_\ast^{i+1}} \leq \ecn{x_t^i - x_\ast^i} + 2 \gamma_t^3 \sigmarr^2. \]
	Unrolling this recursion for $n$ steps we get
	\[ \ecn{x_t^{n} - x_\ast^n} \leq \ecn{x_t^0 - x_\ast^0} + 2 n \gamma_t^3 \sigmarr^2. \]
	By Lemma~\ref{prop:prox-contraction} and a similar reasoning to Theorem~\ref{thm:psi-strongly-convex-f-convex} we have
	\[ \br{1 + 2 \gamma_t \mu n} \ecn{x_{t+1} - x_\ast} \leq \ecn{x_t - x_\ast} + 2 \gamma_t^3 \sigmarr^2. \]
	We may then use Lemma~\ref{lemma:decreasing-stepsizes-recursion-solution} to obtain that
	\begin{align*}
		\ecn{x_T - x_\ast} &\leq \exp\br{-\frac{n T}{2(L_{\max}/\mu + n)}} \sqn{x_0 - x_\ast} + \frac{356 \sigmarr^2}{\mu^3 n^2 T^2} \\
		&= \mathcal{O}\br{ \exp\br{-\frac{n T}{\kappa + 2n}} \sqn{x_0 - x_\ast} + \frac{\sigmarr^2}{\mu^3 n^2 T^2} }.
		\newqedhere
	\end{align*}
\end{proof}

\section{Proof of \Cref{thm:IS} for importance resampling}
\begin{proof}
	We show that $N\le 2n$ as the rest of the theorem's claim trivially follows from \Cref{thm:psi-strongly-convex-f-convex}. Firstly, note that for any number $a\in\mathbb{R}$ we have $\lceil a \rceil \le a+1$. Therefore,
	\[
		N
		= \sum_{i=1}^n \left\lceil\frac{L_i}{\Lave}\right \rceil
		\le \sum_{i=1}^n \left(\frac{L_i}{\Lave}+1\right)
		= n + \sum_{i=1}^n \frac{L_i}{\Lave}
		= 2n. \newqedhere
	\]
\end{proof}

\section{Proofs for federated learning}
\subsection{Lemma for the extended proximal operator}
\begin{Lemma}
    \label{lem:ext-proximal-operator}
    Let $\psi_C$ be the consensus constraint and $R$ be a closed convex proximable function. Suppose that $x_1, x_2, \ldots, x_M$ are all in $\R^d$. Then,
    \[ \prox_{\gamma (R + \psi_C)} (x_1, \ldots, x_M) = \prox_{\frac{\gamma}{M}R}(\overline x), \]
    where $\overline x = \frac{1}{M}\sum_{m=1}^M x_m$.
\end{Lemma}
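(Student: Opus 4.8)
The plan is to unfold the definition of the proximal operator on the product space $(\R^d)^M$ and exploit the fact that the consensus constraint forces all coordinates to coincide, thereby collapsing the $M$-variable minimization to a one-variable one. Writing out the definition,
\[
\prox_{\gamma(R+\psi_C)}(x_1,\dots,x_M) = \argmin_{(z_1,\dots,z_M)}\Bigl\{\gamma(R+\psi_C)(z_1,\dots,z_M) + \tfrac12\sum_{m=1}^M\|z_m-x_m\|^2\Bigr\},
\]
I would first observe that any candidate with finite objective must satisfy $z_1=\dots=z_M=:z$, since otherwise $\psi_C=+\infty$. On this consensus set the combined regularizer $(R+\psi_C)$ reduces to $R(z)$, so the minimization becomes $\argmin_{z\in\R^d}\bigl\{\gamma R(z) + \tfrac12\sum_{m=1}^M\|z-x_m\|^2\bigr\}$.

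The key step is then a standard bias--variance decomposition of the sum of squared distances around its mean $\overline x=\frac1M\sum_{m=1}^M x_m$. I would expand
\[
\sum_{m=1}^M\|z-x_m\|^2 = M\|z-\overline x\|^2 + \sum_{m=1}^M\|x_m - \overline x\|^2,
\]
where the cross term vanishes because $\sum_{m=1}^M (x_m-\overline x)=0$, and the second summand is independent of $z$. Dropping this constant and dividing the whole objective by $M$ (neither operation moves the minimizer) reduces the problem to $\argmin_{z}\bigl\{\tfrac{\gamma}{M}R(z) + \tfrac12\|z-\overline x\|^2\bigr\}$, which is exactly $\prox_{\frac{\gamma}{M}R}(\overline x)$ by definition.

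There is no genuine obstacle here beyond careful bookkeeping; the argument is essentially a completion of squares. The only subtlety worth flagging explicitly is that the left-hand side is formally a point of $(\R^d)^M$ whereas the right-hand side lives in $\R^d$, so I would state that the minimizer has all $M$ coordinates equal to the common value $z^\ast=\prox_{\frac{\gamma}{M}R}(\overline x)$, which justifies the mild abuse of notation in the statement. For well-definedness I would also note that the reduced objective is strongly convex in $z$ (from the quadratic penalty, together with convexity of $R$), so the minimizer is unique and the $\argmin$ is legitimate.
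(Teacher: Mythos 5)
Your proposal is correct and follows essentially the same route as the paper's own proof: restrict to the consensus set, expand $\sum_m\|z-x_m\|^2$ around $\overline x$ so the cross term vanishes, drop the constant, and rescale by $M$ to identify the reduced problem with $\prox_{\frac{\gamma}{M}R}(\overline x)$. Your extra remarks on the block structure of the output and on uniqueness of the minimizer are sensible bookkeeping that the paper handles implicitly by writing the result as a vector of identical blocks.
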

\begin{proof}
    We have,
    \[
        \prox_{\gamma(R+\psi_C)}(x_1, \dotsc, x_M)
        = \begin{pmatrix}
            \prox_{\frac{\gamma}{M}R}(\overline x)\\
            \vdots \\
            \prox_{\frac{\gamma}{M}R}(\overline x)
        \end{pmatrix}
        \quad \text{with} \quad \overline x = \frac{1}{M}\sum_{m=1}^M x_m.
    \]
    This is a simple consequence of the definition of the proximal operator. Indeed, the result of $\prox_{\gamma (R+\psi_C)}$ must have blocks equal to some vector $z$ such that
    \begin{align*}
        z
        &= \argmin_x \left\{\gamma R(x)+ \frac{1}{2}\sum_{m=1}^M\|x-x_m\|^2 \right\}\\
        &= \argmin_x \left\{\gamma R(x)+ \frac{1}{2}\sum_{m=1}^M \bigl(\|x-\overline x\|^2 + 2\<x-\overline x, \overline x - x_m>) + \|\overline x - x_m\|^2 \bigr)\right\}\\
        &= \argmin_x \left\{\gamma R(x)+ \frac{1}{2}M\|x-\overline x\|^2 \right\}
        \quad = \prox_{\frac{\gamma}{M}R}(\overline x).
    \end{align*}
\end{proof}

\subsection{Proof of \Cref{lem:fed_reform_properties}}
\begin{proof}
	Given some vectors $\xx, \yy\in\R^{d\cdot M}$, let us use their block representation $\xx=(x_1^\top,\dotsc, x_M^\top)^\top$, $\yy=(y_1^\top,\dotsc, y_M^\top)^\top$. Since we use the Euclidean norm, we have
	\[
		\|\nabla f_i(\xx)- \nabla f_i(\yy)\|^2
		= \sum_{m=1}^M \|\nabla f_{mi}(x_m)-\nabla f_{mi}(y_m)\|^2
		\le \sum_{m=1}^M L_i^2\|x_m-y_m\|^2
		= L_i^2\|\xx - \yy\|^2.
	\]
	We can obtain a lower bound by doing the same derivation and applying strong convexity instead of smoothness:
	\[
		\sum_{m=1}^M\|\nabla f_{mi}(x_m)-\nabla f_{mi}(y_m)\|^2
		\ge \mu^2 \sum_{m=1}^M\|x_m - y_m\|^2 
		= \mu^2 \|\xx-\yy\|^2.
	\]
	Thus, we have $\mu\|\xx-\yy\|\le \|\nabla f_i(\xx)- \nabla f_i(\yy)\|\le L_i\|\xx-\yy\|$, which is exactly $\mu$-strong convexity and $L_i$-smoothness of $f_i$.
\end{proof}

\subsection{Proof of \Cref{lem:fed_sigma}}
\begin{proof}
	By \Cref{thm:shuffling-radius-bound} we have
	\[
		\sigmarr^2 \le \frac{L_{\max}}{2}\Bigl(n^2\|\nabla f(\xx_\ast)\|^2 + \frac{n}{2}\sigmaesc^2\Bigr).
	\]
	Due to the separable structure of $f$, we have for the variance term
	\[
		n\sigmaesc^2 
		\eqdef  \sum_{i=1}^{n} \sqn{\nabla f_{i} (\xx_\ast) - \nabla f(\xx_\ast)}
		= \sum_{i=1}^{n}\sum_{m=1}^M \sqn{\nabla f_{mi} (x_\ast) - \frac{1}{n}\nabla F_m(x_\ast)}.
	\]
	The expression inside the summation is not exactly the variance due to the different normalization: $\frac{1}{n}$ instead of $\frac{1}{N_m}$. Nevertheless, we can expand the norm and try to get the actual variance:
	\begin{align*}
	     \sum_{i=1}^{n}\sqn{\nabla f_{mi} (x_\ast) - \frac{1}{n}\nabla F_m(x_\ast)} 
	     &=  \sum_{i=1}^{N_m}\biggl( \sqn{\nabla f_{mi} (x_\ast) - \frac{1}{N_m}\nabla F_m(x_\ast)} +\Bigl(\frac{1}{N_m} - \frac{1}{n}\Bigr)^2\sqn{\nabla F_m(x_\ast)}   \biggr)\\
	     &\quad + 2\sum_{i=1}^{N_m}\bigl\langle\nabla f_{mi}(x_\ast) - \frac{1}{N_m}\nabla F_m(x_\ast), \Bigl(\frac{1}{N_m} - \frac{1}{n}\Bigr)\nabla F_m(x_\ast) \bigr\rangle \\
	     &=  N_m\sigma_{m, \ast}^2 + N_m\Bigl(\frac{1}{N_m} - \frac{1}{n}\Bigr)^2\sqn{\nabla F_m(x_\ast)} \\
	     &\le n\sigma_{m, \ast}^2 + \sqn{\nabla F_m(x_\ast)}.
	\end{align*}
	Moreover, the gradient term has the same block structure, so
	\[
		n^2\|\nabla f(\xx_\ast)\|^2
		= n^2\biggl\|\frac{1}{n}\sum_{i=1}^n \nabla f_i(\xx_\ast) \biggr\|^2
		= \sum_{m=1}^M \sqn{\sum_{i=1}^n \nabla f_{mi}(x_\ast)}
		= \sum_{m=1}^M \|\nabla F_m(x_\ast)\|^2.
	\]
	Plugging the last two bounds back inside the upper bound on $\sigmarr^2$, we deduce the lemma's statement.
\end{proof}

\subsection{Proof of \Cref{thm:fed_hetero}}
\begin{proof}
	Since we assume that $N_1=\dotsb=N_M=n$, we have $\frac{N}{M}=n$ and the strong convexity constant  of$\psi=\frac{N}{n}(R+\psi_C)$ is equal to $\frac{N}{n}\cdot \frac{\mu}{M}=\mu$. By applying \Cref{thm:psi-strongly-convex-f-convex} we obtain
	\[
		\ecn{\xx_T - \xx_\ast} \leq \br{1 + 2\gamma \mu n}^{-T} \sqn{\xx_0 - \xx_\ast} + \frac{ \gamma^2 \sigmarr^2}{\mu}.
	\]
	Since $\xx_T = \prox_{\gamma N(R+\psi_C)}(\xx_{T-1}^n)$, we have $\xx_T \in C$, i.e., all of its blocks are equal to each other and we have $\xx_T=(x_T^\top,\dotsc, x_T^\top)^\top$. Since we use the Euclidean norm, it also implies
	\[
		\ecn{\xx_T - \xx_\ast}
		= M \|x_T - x_\ast\|^2.
	\]
	The same is true for $\xx_0$, so we need to divide both sides of  the upper bound on $\|\xx_T -\xx_\ast\|^2$ by $M$. Doing so together with applying \Cref{lem:fed_sigma} yields
	\begin{align*}
		\ecn{x_T - x_\ast} 
		&\leq \br{1 + 2\gamma \mu n}^{-T} \sqn{x_0 - x_\ast}  + \frac{ \gamma^2 \sigmarr^2}{M\mu} \\
		&\le \br{1 + 2\gamma \mu n}^{-T} \sqn{x_0 - x_\ast}  + \frac{ \gamma^2 L_{\max} }{M\mu} \sum_{m=1}^M\Bigl( \|\nabla F_m(x_\ast)\|^2 + \frac{n}{4}\sigma_{m, \ast}^2\Bigr) \\
		&= \br{1 + 2\gamma \mu n}^{-T} \sqn{x_0 - x_\ast}  + \frac{ \gamma^2 L_{\max}}{M\mu} \sum_{m=1}^M\Bigl(  \|\nabla F_m(x_\ast)\|^2 + \frac{N}{4M}\sigma_{m, \ast}^2\Bigr).
	\end{align*}
\end{proof}

\subsection{Proof of \Cref{thm:fed_iid}}
\begin{proof}
	According to~\Cref{lem:fed_reform_properties}, each $f_i$ is $\mu$-strongly convex and $L_{\max}$-smooth, so we obtain the result by trivially applying \Cref{thm:f-strongly-convex-psi-convex} and upper bounding $\sigmarr^2$ the same way as in the proof of~\Cref{thm:fed_hetero}.
\end{proof}

\section{Federated experiments and experimental details}
We also compare the performance of FedRR and Local SGD on homogeneous (i.e., i.i.d.) data. Since Local SGD requires smaller stepsizes to converge, it is significantly slower at initialization, as can be seen in Figure~\ref{fig:fed_rr}. FedRR, however, does not need small initial stepsize and very quickly converges to a noisy neighborhood of the solution. The advantage is clear both from the perspective of the number of communication rounds and data passes.

To illustrate the severe impact of the number of local steps in Local SGD we show results with different number of local steps. The blue line shows Local SGD that takes the number of steps equivalent to full pass over the data by each node. The orange line takes 5 times fewer local steps. Clearly, the latter performs better in terms of communication rounds and local steps, making it clear that Local SGD scales worse with the number of local steps. This phenomenon is well-understood and has been in discussed by~\citet{khaled2020tighter}.
\begin{figure}[t]
\centering
	\includegraphics[scale=0.20]{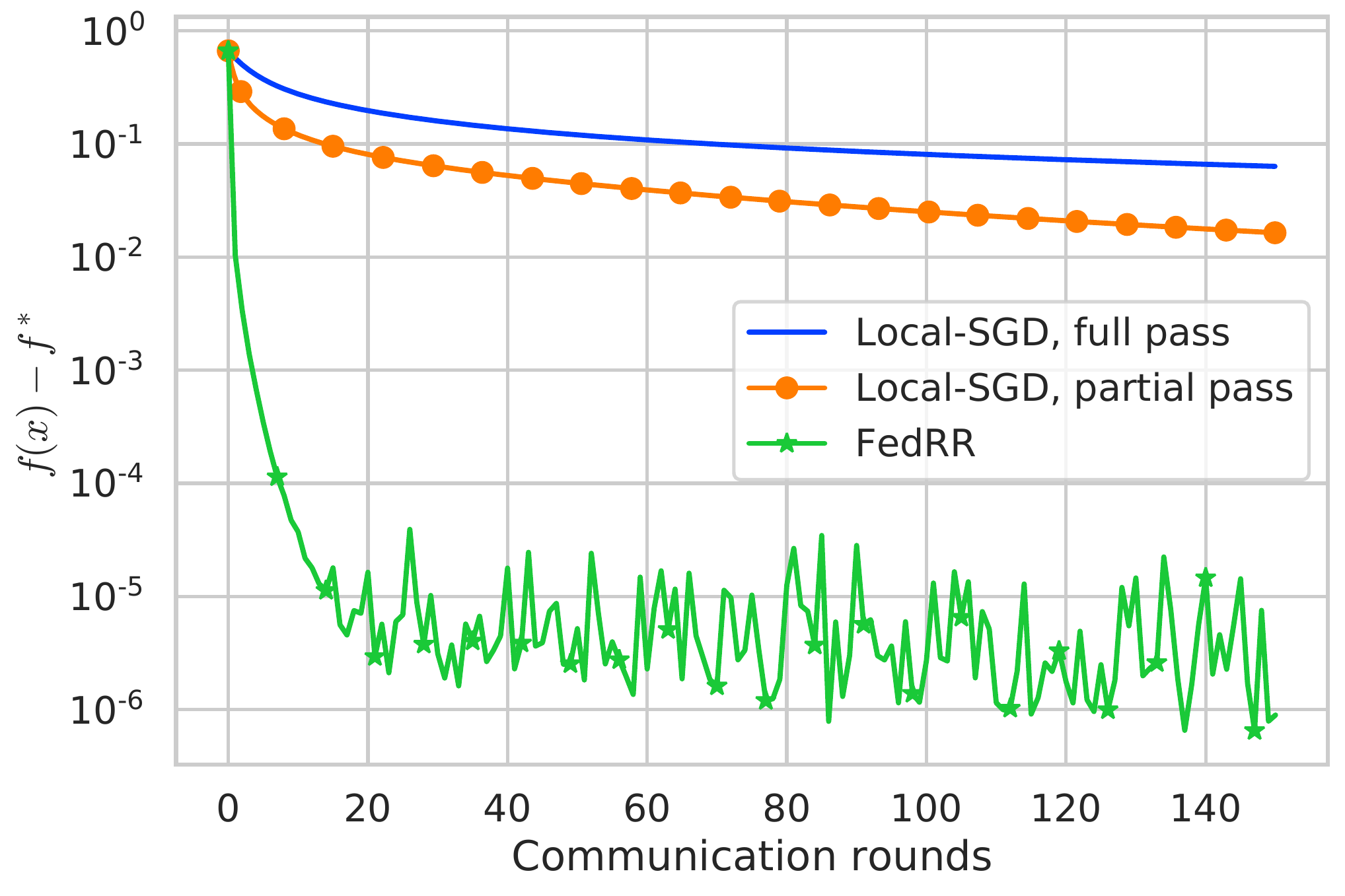}
	\includegraphics[scale=0.20]{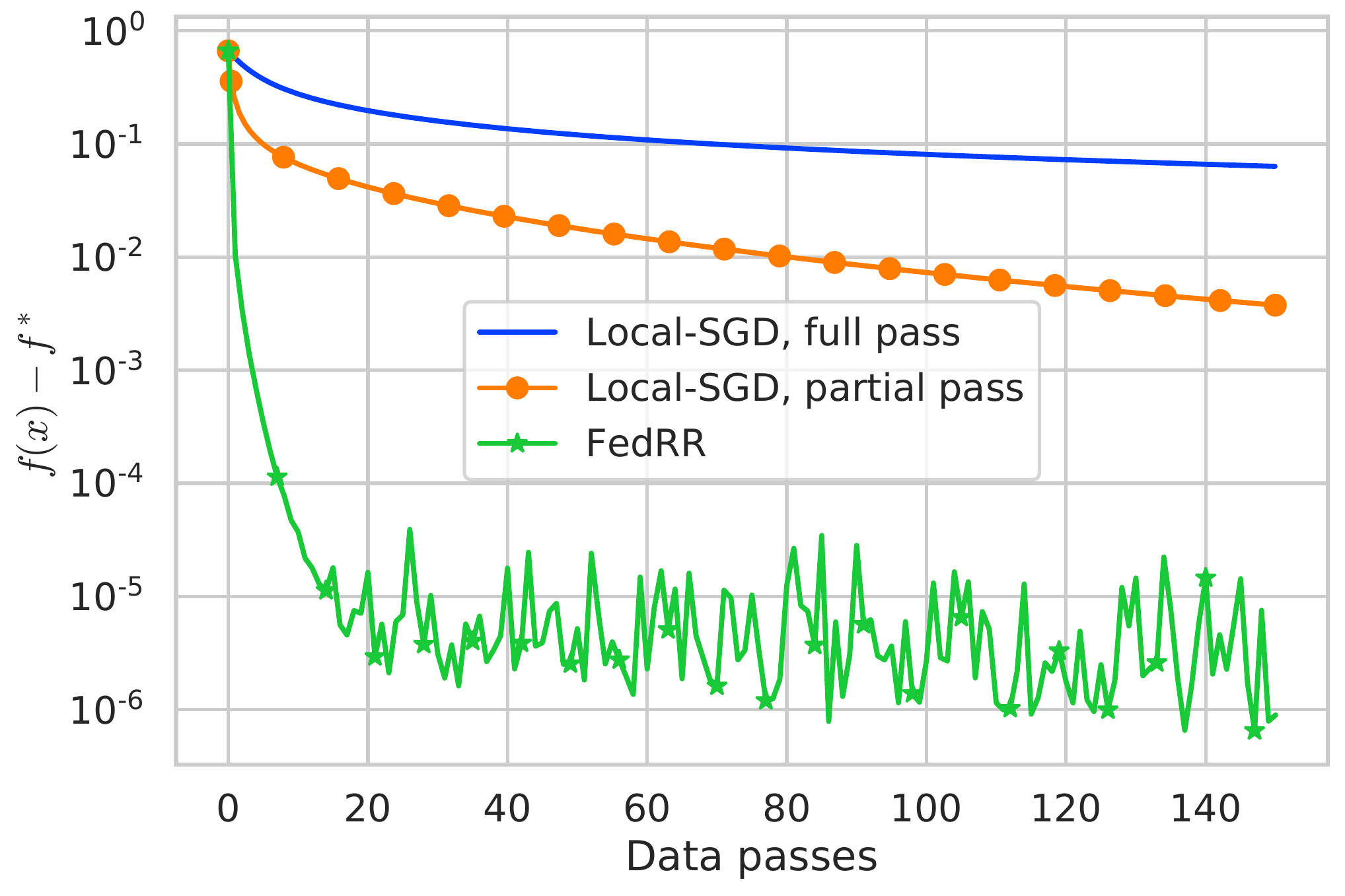}
\caption{Experimental results for parallel training. Left: comparison in terms of communication rounds, right: in terms of data passes}
\label{fig:fed_rr}
\end{figure}

\textbf{Implementation details.} For each $i$, we have $L_i=\frac{1}{4}\|a_i\|$. We set $\lambda_2=\frac{L}{N}$ and tune $\lambda_1$ to obtain a solution with less than 50\% coordinates (exact values are provided in the code). We use stepsizes decreasing as $\cO(\frac{1}{t})$ for all methods.  We use the `a1a' dataset for the experiment with $\ell_1$ regularization.

The experiment for the comparison of FedRR and Local SGD uses no $\ell_1$ regularization and $\lambda_2=\frac{L}{N}$. We choose the stepsizes according to the theory of Local SGD and Fed-RR. As per Theorem~3 in~\cite{khaled2020tighter}, the stepsizes for Local SGD must satisfy $\gamma_t = \cO(1 / (LH))$, where $H$ is the number of local steps.  The parallelization of local runs is done using the Ray package\footnote{\href{https://ray.io/}{https://ray.io/}}. We use the `mushrooms' dataset for this experiment.

\textbf{Proximal operator calculation.} As shown by \cite{Neal2014}, the proximal operator for $\psi(x)=\lambda_1\|x\|_1 + \frac{\lambda_2}{2}\|x\|^2$ is given by
\[
	\prox_{\gamma\psi}(x) = \frac{1}{1+\gamma\lambda_2}\prox_{\gamma\lambda_1\|\cdot\|_1}(x),
\]
where the $j$-th coordinate of $\prox_{\gamma\lambda_1\|\cdot\|_1}(x)$ is
\[
	[\prox_{\gamma\lambda_1\|\cdot\|_1}(x)]_j
	= \begin{cases}
		\mathrm{sign}([x]_j)(|[x]_j|-\gamma\lambda_1), & \text{if } |[x]_j|\ge \gamma\lambda_1,\\
		0, &\text{otherwise}.
	\end{cases}
\]
\end{document}